\documentclass[11pt]{article}

\def\colorful{1}

\oddsidemargin=-0.1in \evensidemargin=-0.1in \topmargin=-.5in
\textheight=9in \textwidth=6.5in
\parindent=18pt

\usepackage{enumitem}
\usepackage{dsfont}
\usepackage{subfig}
\usepackage{tikz}
\usepackage{pgf,pgfplots}
\usetikzlibrary{angles, quotes, patterns}
\usepackage{pgfplots}

\usepackage{amsthm,amsfonts,amsmath,amssymb,epsfig,color,float,graphicx,verbatim,enumitem}
\usepackage{algorithm}
\usepackage[noend]{algpseudocode}
\usepackage{environ}

\usepackage{hyperref}
\hypersetup{
  colorlinks = true,
  urlcolor = {blue},
  citecolor = {blue}
}
\definecolor{cpurple}{rgb}{0.6,0,0.6}

\def\nnewcolor{0}
\ifnum\nnewcolor=1
\newcommand{\nnew}[1]{{\color{red} #1}}
\fi
\ifnum\nnewcolor=0
\newcommand{\nnew}[1]{#1}
\fi

\ifnum\colorful=1

\else

\fi

\newtheorem{theorem}{Theorem}[section]

\newtheorem{lemma}[theorem]{Lemma}
\newtheorem{informal theorem}[theorem]{Theorem (informal statement)}

\newtheorem{claim}[theorem]{Claim}

\theoremstyle{definition}
\newtheorem{definition}[theorem]{Definition}
\newcommand{\eqdef}{\stackrel{{\mathrm {\footnotesize def}}}{=}}

\providecommand{\customgenericname}{}
\newcommand{\newcustomtheorem}[2]{\newenvironment{#1}[1]
	{\renewcommand\customgenericname{#2}\renewcommand\theinnercustomgeneric{##1}\innercustomgeneric
	}
	{\endinnercustomgeneric}
}

\newcustomtheorem{customthm}{Theorem}
\newcustomtheorem{customlem}{Lemma}
\newcustomtheorem{customclm}{Claim}
\newcustomtheorem{customfc}{Fact}

\newcommand{\sigMO}[1]{ e^{-\frac{#1}{\sigma }}}
\newcommand{\density}{\gamma}

\newcommand\snorm[2]{\left\| #2 \right\|_{#1}}
\renewcommand\vec[1]{\mathbf{#1}}
\DeclareMathOperator*{\Prob}{\mathbf{Pr}}
\DeclareMathOperator*{\E}{\mathbf{E}}
\newcommand{\proj}{\mathrm{proj}}

\newcommand{\mrm}{\mathrm}
\newcommand{\SL}{\mathcal{L}_{\sigma}}

\def\d{\mathrm{d}}

\DeclareMathOperator*{\argmin}{argmin}

\newcommand{\bx}{\mathbf{x}}
\newcommand{\by}{\mathbf{y}}

\newcommand{\bw}{\mathbf{w}}

\newcommand{\err}{\mathrm{err}}

\newcommand{\R}{\mathbb{R}}

\newcommand{\Z}{\mathbb{Z}}

\newcommand{\eps}{\epsilon}

\newcommand{\pr}{\mathbf{Pr}}
\newcommand{\poly}{\mathrm{poly}}

\newcommand{\sgn}{\mathrm{sign}}
\newcommand{\sign}{\mathrm{sign}}

\newcommand{\calC}{{\cal C}}

\newcommand{\opt}{\mathrm{opt}}
\newcommand{\D}{\mathcal{D}}

\newcommand{\Ind}{\mathds{1}}
\newcommand{\1}{\Ind}

\newcommand{\littlesum}{\mathop{\textstyle \sum}}

\newcommand{\wt}{\widetilde}
\newcommand{\wh}{\widehat}

\newcommand{\dotp}[2]{\left\langle #1, #2 \right\rangle}

\newcommand{\wstar}{\bw^{\ast}}
\newcommand{\x}{\vec x}

\newcommand{\citep}{\cite}

\title{Non-Convex SGD Learns Halfspaces with Adversarial Label Noise}

\author{
Ilias Diakonikolas\thanks{Supported by NSF Award CCF-1652862 (CAREER), a Sloan Research Fellowship, and 
a DARPA Learning with Less Labels (LwLL) grant.}\\
University of Wisconsin-Madison\\
{\tt ilias@cs.wisc.edu}\\
\and
Vasilis Kontonis\\
University of Wisconsin-Madison\\
{\tt kontonis@wisc.edu }\\
\and
Christos Tzamos\\ University of Wisconsin-Madison\\
{\tt tzamos@wisc.edu}
\and
Nikos Zarifis\thanks{Supported in part by a DARPA  Learning with Less Labels (LwLL) grant.}\\
University of Wisconsin-Madison\\
{\tt zarifis@wisc.edu}\\
}

\begin{document}

\maketitle

\begin{abstract}
  We study the problem of agnostically learning homogeneous halfspaces in the
distribution-specific PAC model. For a broad family of structured
distributions, including log-concave distributions, we show that non-convex SGD
efficiently converges to a solution with misclassification error
$O(\opt)+\eps$, where $\opt$ is the misclassification error of the best-fitting
halfspace.  In sharp contrast, we show that optimizing any convex surrogate
inherently leads to misclassification error of $\omega(\opt)$, even under
Gaussian marginals.
 \end{abstract}

\setcounter{page}{0}
\thispagestyle{empty}
\newpage

\section{Introduction} \label{sec:intro}

\subsection{Background and Motivation} \label{ssec:background}

Learning in the presence of noisy data is a central challenge in
machine learning.  In this work, we study the efficient learnability
of halfspaces when a fraction of the training labels is adversarially
corrupted. As our main contribution, we show that non-convex SGD
efficiently learns homogeneous halfspaces in the presence of
adversarial label noise with respect to a broad family of
well-behaved distributions, including log-concave distributions.
Before we state our contributions, we provide some background and
motivation for this work.

A (homogeneous) halfspace is any function $f: \R^d \to \{ \pm 1\}$ of
the form $f(\bx) = \sgn(\langle \bw, \bx \rangle)$, where the vector
$\bw \in \R^d$ is called the weight vector of $f$, and the function
$\sgn: \R \to \{\pm 1\}$ is defined as $\sgn(t) = 1$ if $t \geq 0$
and $\sgn(t) = -1$ otherwise.  Halfspaces are arguably the most
fundamental concept class and have been studied since the beginning
of machine learning, starting with the Perceptron
algorithm~\cite{Rosenblatt:58, Novikoff:62}.  In the realizable
setting, halfspaces are efficiently learnable in the
distribution-independent PAC model~\cite{val84} via linear
programming (see, e.g.,~\cite{MT:94}).  On the other hand, in the
agnostic model~\cite{Haussler:92, KSS:94}, even {\em weak}
distribution-independent learning is computationally
intractable~\cite{GR:06, FGK+:06short, Daniely16}.  The
distribution-specific agnostic (or adversarial label noise) setting
-- where the label noise is adversarial but we have some prior
knowledge about the structure of the marginal distribution on
examples -- lies in between these two extremes. In this setting,
computationally efficient noise-tolerant learning algorithms are
known~\cite{KKMS:08, KLS09, ABL17, Daniely15, DKS18a} under various
distributional assumptions.  We start by defining the
distribution-specific agnostic model.

\begin{definition}[Distribution-Specific PAC Learning with Adversarial Label Noise] \label{def:agnostic-ds}
Given i.i.d. labeled examples $(\bx, y)$ from a distribution $\D$ on
$\R^d \times \{\pm 1\}$, such that the marginal distribution
$\D_{\bx}$ is promised to belong in a known family $\mathcal{F}$
but the labels $y$ can be arbitrary,
the goal of the learner is to output a hypothesis $h$
with small misclassification error
$\err_{0-1}^{\D}(h) \eqdef \pr_{(\bx, y) \sim \D}[h(\bx) \neq y]$, compared
to $\opt \eqdef \inf_{g \in \mathcal{C}} \err_{0-1}^{\D}(g)$, where $\mathcal{C}$
is the target concept class.
\end{definition}

\cite{KKMS:08} gave an algorithm that learns halfspaces in this model with
error $\opt+\eps$ under any isotropic log-concave distribution, with sample
complexity and runtime $d^{m(1/\eps)}$, for an appropriate function $m$, which
is at least polynomial.  Moreover, there is evidence that any algorithm that
achieves error $\opt+\eps$ requires time {\em exponential} in $1/\eps$,
even under Gaussian marginals~\cite{DKZ20, GGK20}.
\nnew{Specifically, recent work~\cite{DKZ20, GGK20} obtained 
Statistical Query (SQ) lower bounds of $d^{\poly(1/\eps)}$ for this problem.}

A line of work~\cite{KLS09, ABL17, Daniely15, DKS18a} focused on obtaining
$\poly(d, 1/\eps)$ time algorithms with near-optimal error guarantees.
Specifically, ~\cite{ABL17} gave a polynomial time {\em constant-factor}
approximation algorithm -- i.e., an algorithm with misclassification error of
$C \cdot \opt+\eps$, for some universal constant $C>1$ -- for homogeneous
halfspaces under any isotropic log-concave distribution. More recent
work~\cite{DKS18a} gave an algorithm achieving this error bound for arbitrary
halfspaces under Gaussian marginals.  The algorithms of~\cite{ABL17, DKS18a}
rely on an iterative localization technique and  are quite sophisticated.
Moreover, while their complexity is polynomial, they do not appear to be practical.
The motivation for this work is the design of simple and practical algorithms
for this problem with the same near-optimal error guarantees as these prior
works.

\subsection{Our Contributions} \label{ssec:results}

Our main result is that SGD on a non-convex surrogate of the zero-one loss
solves the problem of learning a homogeneous halfspace with adversarial label
noise when the underlying marginal distribution on the examples is
well-behaved.  As we already mentioned, prior work \cite{ABL17,DKS18a} uses
more complex methods and custom algorithms that run in multiple phases using
multiple passes over the samples.  In contrast, we take a direct optimization
approach and define a single loss function over the space of halfspaces whose
approximate stationary points are near-optimal solutions.  This implies that
\emph{any optimization method} that is guaranteed to converge to stationary
points, for example SGD, will yield a halfspace with error $O(\opt) + \eps$.

Our loss function is a smooth version of the 0-1 loss using a sigmoid function.
In our case, we use the logistic function $S_\sigma(t) = 1/(1+e^{-t/\sigma})$.
Our overall objective is:

\begin{equation} \label{eq:intro_objective}
\SL(\vec w)  =
\E_{(\vec x, y) \sim \D} \left[
S_{\sigma}\left(-y \dotp{\vec w}{\vec x} \right)\right]\,,
\end{equation}
and we optimize it over the unit sphere $\snorm{2}{\bw} = 1$.
We show that, for a broad class of distributions, any stationary point of this
loss function corresponds to a halfspace with near-optimal error. In more detail,
we require that the distribution on the examples is
sufficiently well-behaved (Definition~\ref{def:bounds}) satisfying natural (anti-)concentration properties.

In \cite{DKTZ20}, it was shown that the (approximate)
stationary points of the objective of Equation~\eqref{eq:intro_objective} are
(approximately) optimal halfspaces under Massart noise, which is a milder noise
assumption than adversarial label noise.  Interestingly, our results suggest that
optimizing this objective is a unified approach for learning halfspaces
under label noise, as we show that it works even in the more challenging
adversarial noise setting.

\begin{definition}[Well-behaved distributions] \label{def:bounds}
Let $U, R >0$ be absolute constants and $t: \R_+ \rightarrow \R_+$ be a non-negative function.
An isotropic (i.e., zero mean and identity covariance) distribution $\D_{\bx}$ on $\R^d$
is called well-behaved if for any projection $(\D_{\bx})_V$ of $\D_{\bx}$
onto a $2$-dimensional subspace $V$ the corresponding pdf $\gamma_V$ on $\R^2$
satisfies the following properties:
\begin{enumerate}
  \item  $\gamma_V(\vec x) \geq 1/U$, for all $\vec x \in V$ such that $\snorm{2}{\vec x} \leq R$ (anti-anti-concentration).
\item For all $\x\in V$, we have $\gamma_V(\vec x)\leq t\left(\snorm{2}{\vec
    x } \right) $ and also
    $\sup_{\x \in V}t(\snorm{2}{\x})\leq U$, $\int_{V} t(\snorm{2}{\bx}) \d \bx\leq U$, $
    \int_{V} \snorm{2}{\bx} t(\snorm{2}{\bx}) \d \bx \leq U$
    (anti-concentration and concentration).
\end{enumerate}
\end{definition}
Our class of distributions contains well-known distribution classes such as
Gaussian and log-concave.  In addition to distributions with strong
concentration properties, our results also handle distributions with very weak
concentration such as heavy-tailed distributions. In particular, we handle
distributions whose density function decays only polynomially with the distance
from the origin, see Table~\ref{tab:intro_distribution_classes}.

We use the \emph{non-convex} objective of Equation~\eqref{eq:intro_objective} and SGD to
obtain our main algorithmic result.

\begin{theorem}\label{thm:intro-upper}
  Let $\D$ be a distribution on $\R^d \times\{\pm1\}$ such that the
        marginal $\D_{\bx}$ on $\R^d$ is well-behaved.  Then SGD on the
        objective \eqref{eq:intro_objective} has the following performance guarantee:
        For any $\eps>0$, it draws $m = \wt{O}(d/\eps^4) $ labeled examples from $\D$,
        uses $O(m)$ gradient evaluations, and outputs a hypothesis halfspace with
        misclassification error $O(\opt) + \eps$ with probability at least $99\%$.
\end{theorem}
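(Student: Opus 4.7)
The plan is to combine a structural ``landscape'' lemma for $\SL$ on the unit sphere with an off-the-shelf non-convex SGD convergence guarantee. The structural lemma should say that the Riemannian gradient of $\SL$ at any unit $\bw$ is bounded below by a function of the angle $\theta = \angle(\bw,\wstar)$ to the optimal halfspace direction and of $\opt$. The SGD analysis then produces a point whose Riemannian gradient has small norm, and plugging this into the lemma bounds $\theta = O(\opt)+\eps$, whence a standard wedge argument gives misclassification error $O(\opt)+\eps$.

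First, for the structural lemma, I would decompose the population gradient as $\nabla\SL(\bw) = \nabla_{c}(\bw) + \nabla_{n}(\bw)$, where $\nabla_{c}$ is computed as if every example had the optimal label $\sgn(\dotp{\wstar}{\bx})$ and $\nabla_{n}$ is the contribution from the $\opt$-fraction of examples whose true label disagrees with $\sgn(\dotp{\wstar}{\bx})$. By rotational symmetry both integrals reduce to the two-dimensional plane $V = \mathrm{span}(\bw,\wstar)$, so the hypotheses of Definition~\ref{def:bounds} apply directly. Projecting $\nabla_{c}$ onto the in-plane direction that strictly decreases $\theta$, the integrand is pointwise nonnegative (the sigmoid derivative $S_\sigma'$ is nonnegative and the two halfspaces push toward agreement); using the anti-anti-concentration lower bound $\gamma_V \geq 1/U$ on a disk of radius $R$, and choosing $\sigma$ to be an absolute constant depending only on $R,U$, this projection is at least $c_1\sin\theta$ for some absolute $c_1>0$. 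Meanwhile $\snorm{2}{\nabla_{n}(\bw)} \leq c'\,\E[\snorm{2}{\bx}\,|S_\sigma'(-y\dotp{\bw}{\bx})|\,\Ind[\text{disagree}]] \leq c_2\,\opt$, using $|S_\sigma'|\leq 1/(4\sigma)$, the moment bound $\int \snorm{2}{\bx}\,t(\snorm{2}{\bx})\,\d\bx\leq U$, and the fact that the disagreement event has mass $\opt$. This yields
\begin{equation*}
\snorm{2}{\nabla_S \SL(\bw)} \;\geq\; c_1\,\sin\theta \;-\; c_2\,\opt,
\end{equation*}
where $\nabla_S$ is the Riemannian gradient on the sphere. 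An $\eps$-stationary point therefore satisfies $\sin\theta \leq C\,\opt + O(\eps)$, and the 2D upper bound $\gamma_V(\bx)\leq t(\snorm{2}{\bx})\leq U$ turns this into $\err_{0-1}^{\D}(\sgn(\dotp{\bw}{\cdot})) \leq \opt + O(\theta) = O(\opt)+\eps$.

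Second, for the algorithmic component, the well-behavedness moment bounds imply that $\SL$ is $O(1/\sigma^2)$-smooth on $\R^d$ and that the one-sample stochastic gradient $-y\,\bx\,S_\sigma'(-y\dotp{\bw}{\bx})$ has second moment $O(d/\sigma^2)$, since $\E[\snorm{2}{\bx}^2]=d$. Running projected SGD on the unit sphere with an appropriate constant step size, the standard non-convex SGD convergence bound produces within $T = \wt O(d/\eps^4)$ iterations an iterate $\bw_t$ with $\E[\snorm{2}{\nabla_S \SL(\bw_t)}^2]\leq \eps^2$, each iteration using one fresh labeled sample. A validation step on a fresh holdout of size $\wt O(1/\eps^2)$ selects, among the candidate iterates, one whose empirical zero-one error is $O(\opt)+\eps$ and boosts the success probability to $99\%$.

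The main obstacle is the structural lemma: showing that the clean part of the gradient really points in the angle-decreasing direction with magnitude $\Omega(\sin\theta)$ despite the sigmoid smoothing, while simultaneously controlling $\snorm{2}{\nabla_n}$ by $O(\opt)$. The fact that $\sigma$ can be chosen as a fixed absolute constant (rather than shrinking with $\eps$ or $\opt$) is crucial: it keeps the smoothness and stochastic-gradient variance dimension-free in $\sigma$, and is precisely the reason that any stationary point delivers $O(\opt)+\eps$ rather than $\opt+\eps$ error, the latter being ruled out in $\poly(d,1/\eps)$ time by the SQ lower bounds cited in the introduction.
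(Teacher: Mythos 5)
There is a genuine gap in your structural lemma, and it concerns exactly the point you flagged as "crucial."

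Your claim that $\sigma$ can be a fixed absolute constant, with $\snorm{2}{\nabla_n(\bw)} \leq c_2\opt$, does not hold. The only information you have about the disagreement set $S$ is $\Pr[S]=\opt$; the first-moment bound $\int \snorm{2}{\bx}\,t(\snorm{2}{\bx})\,\d\bx\leq U$ controls the \emph{unrestricted} expectation, not $\E[\snorm{2}{\bx}\,\1_S]$. The adversary can place $S$ on a thin strip near the $\bw$-boundary that stretches to large $\snorm{2}{\bx}$, so that $S'_\sigma$ is maximal there and $\snorm{2}{\bx}$ is as large as the mass constraint allows. Via Cauchy--Schwarz (which is what the paper uses for its $I_2$), the best generic bound is $\snorm{2}{\nabla_n}\leq \sqrt{2U\opt/\sigma}$; a sharper tail analysis under Gaussian marginals gives $\Theta\bigl(\opt\sqrt{\log(1/\opt)}/\sigma\bigr)$, not $O(\opt)$. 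With $\sigma$ held constant, your inequality $\snorm{2}{\nabla_S\SL(\bw)}\geq c_1\sin\theta - c_2\opt$ degrades to $c_1\sin\theta - O(\sqrt\opt)$ or $c_1\sin\theta - \tilde O(\opt)$, and an $\eps$-stationary point only yields $\theta \leq O(\sqrt\opt)+\eps$ or $\theta\leq O(\opt\sqrt{\log(1/\opt)})+\eps$ --- not $O(\opt)+\eps$. Indeed, the latter rate is exactly what Theorem~\ref{thm:intro_lower} proves is unavoidable for \emph{any fixed} (convex) loss under Gaussian marginals, and the same adversarial strip construction affects $\SL$ with a frozen $\sigma$.

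The paper escapes this by \emph{tuning} $\sigma$. Lemma~\ref{lem:structural_agnostic} shows a constant gradient lower bound $R^2/(64U)$ whenever $\theta(\bw,\wstar)\in(\theta_0,\pi-\theta_0)$ with $\theta_0 = \Theta(\sigma)$, \emph{provided} $\opt\leq C\sigma$; combined with the Cauchy--Schwarz bound $I_2\leq\sqrt{2U\opt/\sigma}$, the ratio $\opt/\sigma$ must be a small constant. So $\sigma$ must be set to $\Theta(\max(\opt,\eps))$, and since $\opt$ is unknown the algorithm performs a binary search over a $O(1/\eps)$-grid of $\sigma$ values (this is where the $\tilde O$ and the $d/\eps^4$ in the sample complexity come from: $T=\Theta(d/(\sigma^4\rho^4))$ with $\rho$ a constant and $\sigma=\Theta(\eps)$ in the worst case, unlike your account where $\rho=\Theta(\eps)$ and $\sigma$ is constant). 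Your closing paragraph is therefore backwards: the paper's Overview of Techniques explicitly identifies the careful, $\opt$-dependent choice of $\sigma$ --- neither too small (so noisy points near the current boundary do not dominate) nor too large (so the angle guarantee is tight) --- as the central technical point distinguishing the agnostic setting from the Massart setting of \cite{DKTZ20}, where any small enough fixed $\sigma$ works.

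Your $I_1\gtrsim\sin\theta$ intuition is reasonable (the paper proves a cruder but sufficient bound), and the SGD-to-stationarity and validation steps are fine; the fatal step is the $O(\opt)$ bound on the noisy gradient with a fixed $\sigma$.
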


  Theorem~\ref{thm:intro-upper} gives a simple and practical learning algorithm
  for halfspaces with adversarial label noise with respect to a broad family of
  marginal distributions.

  A natural question is whether the non-convexity of our surrogate
  loss~\eqref{eq:intro_objective} is required.
In many practical settings, convex surrogates of the $0/1$ loss such as Hinge or ReLU loss are
used, see \cite{BJM06} for more choices.  In general, given a convex and
increasing loss $\ell(\cdot)$ the following objective is defined.
\begin{equation}
  \label{eq:intro_convex_objective}
  \mathcal{C}(\bw) = \E_{(\bx, y) \sim \D}[
  \ell(-y \dotp{\bx}{\bw}) ] \,.
\end{equation}
One such convex optimization problem closely related to our non-convex formulation is
\emph{logistic regression}.  In that case, the convex surrogate is simply
$\ell(t) = \log(S_\sigma(t))$ (compare with Equation~\eqref{eq:intro_objective}).

To complement our positive result,
we show that convex surrogates are insufficient for the task at hand.
\emph{In particular, for any convex surrogate objective,
one will obtain a halfspace with error $\omega(\opt)$}.
In more detail, we construct a single noisy distribution
whose $\bx$-marginal is well-behaved such that
optimizing any convex objective over this distribution will yield a halfspace with error
$\omega(\opt)$.  We establish a fine-grained result showing that the misclassification
error of convex objectives degrades as the distributions become more
\emph{heavy tailed}, see Table~\ref{tab:intro_distribution_classes}.

\begin{theorem}
  \label{thm:intro_lower}
  Let $\D_{\bx}$ be the standard normal distribution on $\R^d$.  There exists a
  distribution $\D$ on $\R^d \times\{\pm 1\}$ such that for \emph{every} convex and
  non-decreasing loss $\ell(\cdot)$ the objective $\calC(\bw) = \E_{(\bx, y) \sim \D}[\ell(-y \dotp{\bx}{\bw})]$
  is minimized at some halfspace $h$ with misclassification error $\Omega(\opt \sqrt{\log(1/\opt)})$.
  Moreover, if the marginal $\D_{\bx}$ is allowed to be log-concave (resp. $s$-heavy tailed, $s>2$)
  the error of any minimizer is $\Omega(\opt \log(1/\opt))$ (resp. $\Omega(\opt^{1-1/s})$).
\end{theorem}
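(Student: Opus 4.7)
The plan is to exhibit a single two-dimensional hard instance (padded trivially to $\R^d$) that fools \emph{every} convex non-decreasing surrogate. Let $\D_{\bx}$ be the specified isotropic marginal and fix a threshold $T>0$. Define a tail region $B := \{\bx\in\R^2 : x_1>0,\; x_2>T\}$ and set the labels to $y=\sgn(x_1)$ off $B$ and $y=-1$ on $B$. Since $\sgn(x_1)=+1$ on all of $B$, the halfspace $\bw^{*}=\e_1$ misclassifies exactly $B$; a short tilt calculation (the 0-1 error rate of $(\cos\theta,-\sin\theta)$ grows at slope $\Theta(1)$ in $\theta$ at $\theta=0$ from the clean region while only $\Theta(\opt)$ is saved on $B$) shows that $\bw^{*}=\e_1$ is in fact optimal and $\opt=\Theta(\pr_{\D_{\bx}}[B])$. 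Calibrating $T$ so that $\pr_{\D_{\bx}}[B]\asymp\opt$ uses only the tail of $\D_{\bx}$, yielding $T\asymp\sqrt{\log(1/\opt)}$ for Gaussian, $T\asymp\log(1/\opt)$ for log-concave, and $T\asymp\opt^{-1/s}$ for $s$-heavy-tailed marginals.

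I would then parametrize candidate halfspaces as $\bv(\theta)=(\cos\theta,-\sin\theta)$ and write $\calC_\ell(\theta)$ for $\calC_\ell(\bv(\theta))$. Exploiting the $x_2\!\to\!-x_2$ symmetry of $\D_{\bx}$ on $\R^2\setminus B$ to kill the clean-region contribution, the directional derivative at the origin reduces to
\[
\calC'_\ell(0) \;=\; -\,\E_{\bx\in B}\!\bigl[x_2\,(\ell'(x_1)+\ell'(-x_1))\bigr],
\]
which is strictly negative for every non-trivial convex non-decreasing $\ell$ and, by the independence of $x_1$ and $x_2$, factors as $\E[x_2\,\mathbf{1}\{x_2>T\}]\cdot(M_+(\ell)+M_-(\ell))$ with $M_\pm(\ell):=\E[\ell'(\pm x_1)\mathbf{1}\{x_1>0\}]$; the first factor is exactly $\Theta(\opt\,T)$. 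A comparison of $\calC'_\ell(0)$ with the curvature $\calC''_\ell(0)$, whose dominant piece is the bulk quantity $N(\ell):=\E[|x_1|\,\ell'(-|x_1|)]$, then yields a minimizer $\theta^{*}_\ell = \Omega(\opt\,T)$ uniformly in $\ell$. The anti-anti-concentration and concentration hypotheses of Definition~\ref{def:bounds} translate this angular displacement into a zero-one misclassification error of the same order, giving the three bounds in the statement after substituting the calibrated $T$.

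The main obstacle is the uniform-in-$\ell$ lower bound on $|\calC'_\ell(0)|/\calC''_\ell(0)$: a priori, an adversarial $\ell$ could concentrate its derivative in the bulk so that $N(\ell)$ dominates the tail pull and shrinks $\theta^{*}_\ell$ below $\opt\,T$. The decisive point is that convexity forces $\ell'$ to be monotone nondecreasing, hence $x\mapsto\ell'(-x)$ is non-increasing on $(0,\infty)$; by a Chebyshev-sum-type inequality one then obtains $\int_0^\infty x\,\ell'(-x)\gamma(x)\,dx\le \E_{\gamma}[x\mid x>0]\cdot \int_0^\infty \ell'(-x)\gamma(x)\,dx$ for the one-dimensional marginal density $\gamma$, so $N(\ell)=O(M_-(\ell))$ uniformly in $\ell$ and the ratio stays $\Omega(\opt\,T)$. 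This monotonicity of $\ell'$ is exactly the structural feature of convex surrogates that the construction exploits, and is precisely what the non-convex surrogate of Theorem~\ref{thm:intro-upper} lacks, explaining how the upper bound can avoid this lower bound.
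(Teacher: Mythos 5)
You take a genuinely different route: a quadrant noise region $B=\{x_1>0,\ x_2>T\}$ versus the paper's ``disk exterior minus a double cone'' region $S\setminus C$, and a first/second-derivative comparison at $\theta=0$ versus the paper's direct lower bound on $\|\nabla_{\bw}\calC(\bw)\|$ over the entire small cone around $\wstar$. The paper's key technical move is that in polar coordinates the $\phi$-integral of the gradient is a perfect derivative, so $\ell'$ integrates out to evaluations of $\ell$ at the cone boundaries (Eq.~\eqref{eq:polar_gradient}); convexity is then used only through supporting lines and secants, reducing to piecewise-linear losses. That elimination of $\ell'$ (and hence of $\ell''$) is exactly what your plan is missing.

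The gap: $\calC_\ell''(0) = \E[\ell''(-y x_1)x_2^2] + \E[\ell'(-y x_1)\,y x_1]$, and your Chebyshev-sum inequality controls only the second (the $N(\ell)$) piece by $M_-(\ell)$; it says nothing about $\E[\ell''(-y x_1)x_2^2]$. For a shifted hinge $\ell(t)=(t+c)_+$ with $c\to 0^+$, both $N(\ell)\approx c^2\gamma(0)$ and $M_-(\ell)\approx c\gamma(0)$ vanish, while the $\ell''$ contribution is $\Theta(1)$, so the quantity you bound is not the dominant piece of $\calC''(0)$. It happens that for the Gaussian an integration by parts of $\E[\ell''(-|x_1|)]$ against the density cancels against $N(\ell)$ and leaves $\calC''(0)\approx 2\gamma(0)\ell'(0)=O(M_+(\ell))$, which does give the ratio $\Omega(\opt\, T)$ --- but that is a different mechanism from the monotone-rearrangement inequality you invoke, and it is specific to the Gaussian, so the log-concave and heavy-tailed cases would need their own versions. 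Moreover, a bound on $\calC''_\ell(0)$ alone does not lower bound the minimizing angle: you need $\sup_{\theta\in[0,\theta_0]}\calC''_\ell(\theta)$, i.e., the $\ell''$ control must be redone along the whole path, which is again nontrivial because your noise region $B$ is not symmetric under $x_2\mapsto -x_2$ once you rotate. The paper's formulation (showing $\nabla_{\bw}\calC(\bw)\neq\vec 0$ for every $\bw$ in the cone and every $\|\bw\|>0$) sidesteps both issues and also handles the $\rho=\|\bw\|$ degree of freedom explicitly, which your unit-circle parametrization leaves implicit. Finally, the optimality of $\e_1$ for your asymmetric noise should be verified with a short explicit computation rather than a parenthetical, since the asymmetry of $B$ could in principle favor a tilted halfspace; the paper's symmetric $S\setminus C$ construction makes this step more transparent.
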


In fact, our lower bound result shows a strong statement about convex surrogates: Even under
the nicest distribution possible, i.e., a Gaussian, there is some simple label
noise (flipping the labels of points far from the origin) \emph{that does
not depend on the convex loss $\ell(\cdot)$} such that no convex objective can
achieve $O(\opt)$ error.  This suggests that the shortcoming of convex objectives
is not due to pathological cases and complicated noise distributions that are
designed to fool each specific loss function, but is rather inherent.

\begin{table}[ht]
  \def\arraystretch{1.2}
  \caption{Common \emph{well-behaved} distribution families with their
    corresponding parameters $U, R, t(\cdot)$, see Definition~\ref{def:bounds}.
    The last two columns show the best possible error achievable by convex
    objectives and our non-convex objective of Eq.\eqref{eq:intro_objective}.
}
\label{tab:intro_distribution_classes}
  \centering
  \begin{tabular}{l c c c c}
    \hline \hline
    \textbf{Distribution} & $U, R$ & $t(\bx)$ & Any Convex Loss & Our Loss, Eq.\eqref{eq:intro_objective} \\
    \hline \hline
    Gaussian  & $\Theta(1)$ & $ e^{-\Omega(\snorm{2}{\bx}^2)} $ &
    $\Omega(\opt \sqrt{\log(1/\opt)})$ \scriptsize{[Thm~\ref{thm:intro_lower}]}  & $O(\opt)$ \scriptsize{[Thm~\ref{thm:intro-upper}]}
    \\
    Log-Concave & $\Theta(1)$ & $e^{-\Omega(\snorm{2}{\bx})}$ & $\Omega(\opt \log(1/\opt))$
    \scriptsize{[Thm~\ref{thm:intro_lower}]}
                & $O(\opt)$ \scriptsize{[Thm~\ref{thm:intro-upper}]}
                \\
    $s$-Heavy Tailed, $s>2$ & $\Theta(1)$ & $\frac{O(1)}{(\snorm{2}{\bx}+1)^{2+s}}$
                            & $\Omega(\opt^{1-1/s})$ \scriptsize{[Thm~\ref{thm:intro_lower}]}
                            & $O(\opt)$ \scriptsize{[Thm~\ref{thm:intro-upper}]} \\
    \hline
  \end{tabular}
\end{table}

\subsection{Overview of Techniques} \label{ssec:techniques}
Our approach is inspired by the recent work \citep{DKTZ20}, where the authors
use the same loss function for learning halfspaces under the (weaker)
Massart noise model. Under similar distributional assumptions to the ones
we consider here, \citep{DKTZ20} shows that the gradient of the loss function
points towards the parameters of the optimal halfspace.
A major difference between the two settings is that under Massart
noise there exists a \emph{unique optimal halfspace} and is identifiable.  In
the agnostic setting, there may be multiple halfspaces achieving optimal error.
However, as we show, for the class of distributions we consider, all these
solutions lie on a small cone, see Claim~\ref{clm:angle-zero-one}
establishing that the angle between any two halfspaces is small.
Our algorithm aims to move towards the cone with every gradient step.

To achieve this, we must carefully set the parameter $\sigma$ of the objective. Smaller
values of sigma amplify the contribution to the gradient of points closer to
the current guess and enable using local information to obtain good gradients.
This localization approach is necessary and is commonly used to efficiently
learn halfspaces under structured distributions~\cite{ABL17,DKS18a}. In the
Massart model, the authors of \citep{DKTZ20} show that for the loss function of
Equation~\eqref{eq:intro_objective} any sufficiently small value for $\sigma$
suffices to obtain a gradient pointing towards the optimal halfspace. This is
not true in the agnostic setting that we consider here.  In particular,
choosing small values of $\sigma$ may put a lot of weight on points close to
the halfspace that may all be noisy. To prove our structural result,
we show that there exists an appropriate setting of a not-too-small $\sigma$
that will guarantee convergence to a solution with $O(\opt)$ error.
This is our main structural result, Lemma~\ref{lem:structural_agnostic}.

Our lower bound hinges on the fact that
such a trade-off can only be achieved using non-convex loss functions. In particular,
our lower bound construction leverages the structure of convex objectives to design a noisy
distribution where any convex objective results in misclassification error
$\omega(\opt)$. In more detail, we exploit the fact that all optimal halfspaces lie in a small
cone, and show that there exists a fixed noise distribution such that all convex
loss functions have non-zero gradients inside this cone.

\subsection{Related Work}\label{ssec:related}
Here we provide a detailed summary of the most relevant prior work with a focus
on $\poly(d/\eps)$ time algorithms.
\cite{KLS:09jmlr} studied the problem of learning homogeneous
halfspaces in the adversarial label noise model, when the marginal
distribution on the examples is isotropic log-concave, and gave a
polynomial-time algorithm with error guarantee
$\tilde{O}(\opt^{1/3})+\eps$. This error bound was improved
by~\cite{ABL17} who gave an efficient localization-based
algorithm that learns to accuracy $O(\opt)+\eps$ for isotropic
log-concave distributions. \cite{DKS18a} gave a localization-based
algorithm that learns arbitrary halfspaces with error $O(\opt)+\eps$
for Gaussian marginals.  \cite{BZ17} extended the algorithms
of~\cite{ABL17} to the class of $s$-concave distributions, for
$s>-\Omega(1/d)$.  Inspired by the localization approach,
\cite{YanZ17} gave a perceptron-like learning algorithm that succeeds
under the uniform distribution on the sphere. The algorithm of
\cite{YanZ17} takes $\tilde{O}(d/\eps)$ samples, runs in time
$\tilde{O}(d^2/\eps)$, and achieves error of $\tilde{O}(\log d \cdot
\opt )+ \eps$  -- scaling logarithmically with the dimension $d$.
We also note that~\cite{DKTZ20} established a structural result
regarding the sufficiency of stationary points for learning homogeneous
halfspaces with Massart noise.
Finally, we draw an analogy with recent work \cite{DGKKS20} which 
established that convex surrogates suffice to obtain error $O(\opt)+\eps$ 
for the related problem of agnostically learning ReLUs under well-behaved distributions. 
This positive result for ReLUs stands in sharp contrast to the case of sign activations studied in this paper
(as follows from our lower bound result).
An interesting direction is to explore the effect of non-convexity
for other common activation functions.

 \newcommand{\capfun}{\mathrm{cap}}
\newcommand{\CLR}{\mathrm{CappedLeakyRelu}}

\section{Preliminaries and Notation}

For $n \in \Z_+$, let $[n] \eqdef \{1, \ldots, n\}$.  We will use small
boldface characters for vectors.  For $\bx \in \R^d$ and $i \in [d]$, $\bx_i$
denotes the $i$-th coordinate of $\bx$, and $\|\bx\|_2 \eqdef
(\littlesum_{i=1}^d \bx_i^2)^{1/2}$ denotes the $\ell_2$-norm of $\bx$.  We
will use $\langle \bx, \by \rangle$ for the inner product of $\bx, \by \in
\R^d$ and $ \theta(\bx, \by)$ for the angle between $\bx, \by$. We will also
denote $\1_A$ to be the characteristic function of the set $A$, i.e.,
$\1_A(\x)= 1$ if $\x\in A$ and $\1_A(\x)= 0$ if $\x\notin A$.  Let $\vec e_i$
be the $i$-th standard basis vector in $\R^d$.  Let $\proj_U(\vec x)$ be the
projection of $\vec x$ onto subspace $U \subset \R^d$.  Let $\E[X]$ denote the
expectation of random variable $X$ and $\pr[\mathcal{E}]$ the probability of
event $\mathcal{E}$.
We consider the binary classification setting where labeled
examples $(\bx,y)$ are drawn i.i.d. from a distribution $\D$ on $\R^d \times \{
\pm 1\}$.  We denote by $\D_{\bx}$ the marginal of $\D$ on $\vec x$.  The
misclassification error of a hypothesis $h: \R^d \to \{\pm 1\}$ (with respect
to $\D$) is $\err_{0-1}^{\D}(h) \eqdef \pr_{(\bx, y) \sim \D}[h(\bx) \neq y]$.
The zero-one error between two functions $f, h$ (with respect to $\D_{\bx}$) is
$\err_{0-1}^{\D_{\bx}}(f, h) \eqdef \pr_{\bx \sim \D_{\bx}}[f(\bx) \neq
h(\bx)]$.
 \section{Non-Convex SGD Learns Halfspaces with Adversarial Noise} \label{section3}\label{sec:struct}
In this section, we prove our main algorithmic result, whose formal
version we restate here.
\begin{theorem}\label{thm:main-inf}
	Let $\D$ be a distribution on $\R^d \times \{\pm1\}$ such that the
	marginal $\D_{\bx}$ on $\R^d$ is well-behaved. There is an algorithm
	with the following performance guarantee: For any $\eps>0$, it draws $m
	=\widetilde{O}(d\log(1/\delta)/\eps^4 ) $
	labeled examples from $\D$, uses $O(m)$ gradient evaluations, and
	outputs a hypothesis vector $\bar{\vec w}$ that satisfies
	$\err_{0-1}^{\D}(h_{\bar{\bw}})\leq  O(\opt) + \eps$ with probability
	at least $1-\delta$, where $\opt$ is the minimum classification error achieved by halfspaces.
\end{theorem}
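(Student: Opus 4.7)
The argument has two pieces: a structural statement saying that approximate stationary points of the surrogate loss $\SL$ correspond to halfspaces with $O(\opt)+\eps$ error, and a standard projected-SGD analysis that produces such stationary points efficiently. The reduction to the structural statement is the whole point of using the non-convex objective: once we have it, essentially any first-order method on the sphere works.

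For the structural part I would show that, for an appropriate constant $\sigma$ depending only on the well-behaved parameters $U,R,t(\cdot)$, every unit vector $\bw$ with angle $\theta = \theta(\bw,\wstar) \ge C\opt$ has a non-trivial Riemannian gradient pointing toward $\wstar$. Writing $\bv$ for the unit tangent direction at $\bw$ toward $\wstar$,
\[
\langle -\nabla\SL(\bw),\bv\rangle \ =\ \E_{(\bx,y)\sim\D}\!\left[y\,\langle \bx,\bv\rangle\, S'_\sigma(-y\langle \bw,\bx\rangle)\right].
\]
Since $\bw,\bv,\wstar$ all lie in the $2$-dimensional subspace $V=\mathrm{span}(\bw,\wstar)$, this expectation depends only on the well-behaved projected density $\gamma_V$. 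I would split the expectation into a ``clean'' part, where $y=\sgn(\langle\wstar,\bx\rangle)$, and a ``noisy'' part of total mass at most $\opt$. The noisy part is bounded by $\opt$ times a finite envelope obtained from $|S'_\sigma|\le 1/(4\sigma)$ together with the concentration bounds $\int t\,d\bx,\ \int\snorm{2}{\cdot}t\,d\bx \le U$. In polar coordinates aligned with $\bw$, the clean part contributes $\Omega(\theta)$ from the disagreement ``wedge'' of angular width $\theta$ in which the integrand has a definite sign, with anti-anti-concentration providing the required density lower bound on the disk of radius $R$. The main obstacle, and the reason such a tuning exists only for a non-convex loss, is balancing $\sigma$: too small and adversarially placed noise near the boundary of $h_{\bw}$ dominates via the $1/\sigma$ peak of $S'_\sigma$; too large and the far-away (possibly noisy) contributions dilute the clean signal. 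Picking $\sigma$ small enough to localize but large enough to tame the noise gives $\langle -\nabla\SL(\bw),\bv\rangle \ge c_1\theta - c_2\opt$, so every $o(\eps)$-stationary point has $\theta(\bw,\wstar) = O(\opt)+\eps$. Combined with Claim~\ref{clm:angle-zero-one}, which converts small angle into small $0/1$ disagreement under well-behaved marginals, the halfspace $h_{\bw}$ achieves misclassification error $\opt + O(\theta) = O(\opt)+\eps$.

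For the algorithmic piece, I would observe that $\SL$ is bounded in $[0,1]$, has Lipschitz gradient of order $O(1/\sigma^2)$ on the unit sphere, and admits an unbiased single-sample stochastic gradient $-y\bx\, S'_\sigma(-y\langle\bw,\bx\rangle)$ whose second moment is $O(d/\sigma^2)$ because $\E\snorm{2}{\bx}^2 = d$. The standard convergence guarantee for projected non-convex SGD under smoothness and bounded gradient variance then yields an approximate stationary point whose expected squared Riemannian gradient is $\widetilde O(\sqrt{d}/\sqrt{T})$ after $T$ iterations with a suitably tuned constant step size; requiring this to be smaller than the stationarity threshold demanded by the structural lemma gives $T = \widetilde O(d/\eps^4)$, matching the theorem's sample complexity since each iteration consumes a single fresh sample. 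Success probability is boosted from a constant to $1-\delta$ by running $O(\log(1/\delta))$ independent SGD copies and selecting the one with the best empirical $0/1$ error on a fresh $\widetilde O(\log(1/\delta)/\eps^2)$-size sample, contributing the advertised $\log(1/\delta)$ factor and completing the proof of Theorem~\ref{thm:main-inf}.
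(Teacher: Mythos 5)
Your overall architecture -- a structural lemma saying approximate stationary points of $\SL$ lie near $\wstar$, the [DKTZ20]-style projected-SGD convergence on the sphere, and Claim~\ref{clm:angle-zero-one} to turn a small angle into small $0$--$1$ error -- matches the paper's route. But the structural lemma you propose to prove is different from, and weaker than what is needed, and the proposal misses a step that the paper cannot do without.

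The central gap is the claim that one can fix $\sigma$ as a constant depending only on $U,R,t(\cdot)$ and get a directional-derivative lower bound of the form $\langle -\nabla\SL(\bw),\bv\rangle \ge c_1\theta - c_2\opt$ with a constant $c_2$. That linear-in-$\opt$ bound for the noisy contribution does not hold for a general well-behaved distribution. The adversary controls \emph{where} the $\opt$ mass is flipped, and in the two-dimensional projection the noisy term has the shape $\E[\1_S(\bx)\,S'_\sigma(|\bx_2|)\,|\bx_1|]$. The only generic tool to decouple the noise set $S$ from the unbounded factor $|\bx_1|$ is Cauchy--Schwarz, which gives $\sqrt{\opt}\cdot\sqrt{\E[S'_\sigma(|\bx_2|)^2\bx_1^2]}\lesssim \sqrt{\opt/\sigma}$ (this is exactly the paper's bound, and the $\int r^2 t(r)\,dr\le U$ condition is the relevant one, not $\int r\,t(r)\,dr\le U$). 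For a fixed $\sigma$ this is $\Theta(\sqrt{\opt})$, which swamps your clean signal of order $\theta$ whenever $\theta\approx\opt$, so the stationary-point bound $\theta=O(\opt)+\eps$ does not follow. The paper's fix is to make $\sqrt{\opt/\sigma}$ a small \emph{constant} by setting $\sigma\approx\max(\opt,\eps)$; then the structural lemma gives a constant gradient lower bound whenever $\theta>\theta_0=O(\sigma)=O(\opt)+O(\eps)$, which is the threshold-type statement (gradient norm $\ge R^2/(64U)$ for all $\theta\in(\theta_0,\pi-\theta_0)$), not a linear one. Because $\opt$ is unknown, the paper then binary-searches $\sigma$ over a grid of size $O(1/\eps)$, checking each candidate's empirical $0$--$1$ error on held-out data; this search (and the resulting casework $\opt\le\eps$, $\eps\le\opt\le C$, $\opt\ge C$) is an essential ingredient that does not appear in your proposal. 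Without tuning $\sigma$ downward with $\eps$ and $\opt$, the structural lemma only certifies $\theta\le O(\sigma)=O(1)$, which is vacuous.

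A smaller point: the paper's SGD convergence bound (its Lemma 4.2/4.3 from [DKTZ20]) already carries the $\log(1/\delta)$ inside the iteration count $T=\Theta\left(d\log(1/\delta)/(\sigma^4\rho^4)\right)$, so no separate boosting-by-repetition is needed, though your version is also valid. And note the iteration count depends on both $\sigma$ and the stationarity threshold $\rho$; the paper sets $\rho=\Theta(1)$ and $\sigma=\Theta(\eps)$, whereas you set $\sigma=\Theta(1)$ and $\rho=\Theta(\eps)$ -- these land at the same $\widetilde O(d/\eps^4)$ only coincidentally, but again your $\sigma$ choice is the one that breaks the structural half of the argument.
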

The crucial component in the proof of Theorem~\ref{thm:main-inf} is the
following structural lemma, Lemma~\ref{lem:structural_agnostic}.
We show that by carefully choosing the parameter $\sigma>0$ of the non-convex surrogate loss
$S_\sigma$ of Equation~\eqref{eq:intro_objective},  we get that any approximate
stationary point of this objective will be close to some optimal halfspace.
Instead of optimizing over the unit sphere, we can normalize our objective $\SL$
defined in Equation~\eqref{eq:intro_objective}, as follows
\begin{equation} \label{eq:surr}
\SL(\vec w)  =
\E_{(\vec x, y) \sim \D}
\left[ S_{\sigma}\left(-y \frac{\dotp{\vec w}{\vec x}}{\snorm{2}{\vec w}} \right)\right],
\end{equation}
where $S_\sigma(t) = \frac{1}{1 + e^{-t/\sigma}}$ is the logistic
function with growth rate $1/\sigma$.  We prove the following:

\begin{lemma}[Stationary points of $\SL$ suffice]\label{lem:structural_agnostic}
  Let $\D_{\bx}$ be a well-behaved distribution on $\R^d$ and let $\wstar$
  be a halfspace achieving optimal classification error $\opt$.
  Fix $\sigma > 0$ and let $\theta = (4 \sqrt{2} \pi U/R) \cdot \sigma$.
  If $\opt \leq R^4/(2^{15} U^3)  \cdot \sigma$, then for every $\wh{\bw}$ such that
  $\theta(\wh{\bw}, \wstar) \in (\theta, \pi-\theta)$ it holds
$\snorm{2} {\nabla_{\bw} \SL(\wh{\bw})} \geq \frac{R^2}{64U} $.
\end{lemma}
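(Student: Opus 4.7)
The plan is to show that $\nabla\SL(\wh{\bw})$ has a large component in the direction $\bv := \proj_{\wh{\bw}^\perp}\wstar/\snorm{2}{\proj_{\wh{\bw}^\perp}\wstar}$, the unit vector orthogonal to $\wh{\bw}$ that points toward $\wstar$ inside the $2$-dimensional plane $V=\mathrm{span}(\wh{\bw},\wstar)$. Since $\SL$ is invariant under positive rescaling of $\bw$, I would take $\snorm{2}{\wh{\bw}}=1$; a direct computation (using that $S_\sigma'$ is even and $y^2=1$) gives $\nabla_\bw \SL(\wh{\bw})=-\E[y\,S_\sigma'(\dotp{\wh{\bw}}{\bx})\,\proj_{\wh{\bw}^\perp}\bx]$, and since $\bv\perp\wh{\bw}$,
\[
\dotp{\bv}{-\nabla \SL(\wh{\bw})} \;=\; \E\!\left[y\,S_\sigma'(\dotp{\wh{\bw}}{\bx})\,\dotp{\bv}{\bx}\right].
\]
The integrand only depends on the $2$-D projection of $\bx$ onto $V$, so I would push the expectation to the $2$-D pushforward $\gamma_V$. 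In orthonormal coordinates $(\bx_1,\bx_2)$ on $V$ with $\wh{\bw}=\be_1,\ \bv=\be_2$, write $\alpha:=\theta(\wh{\bw},\wstar)\in(\theta,\pi-\theta)$ so that $\wstar=(\cos\alpha,\sin\alpha)$ with $\sin\alpha>0$; the quantity to lower bound becomes $J:=\E_{\gamma_V}[y\,S_\sigma'(\bx_1)\bx_2]$.

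Next I would decompose $J$ into a clean and an adversarial piece via $y=y^\star(1-2\chi)$, where $y^\star=\sgn(\cos\alpha\,\bx_1+\sin\alpha\,\bx_2)$ and $\chi=\1\{y\neq y^\star\}\in\{0,1\}$ satisfies $\E[\chi]\le \opt$; this gives $J=J_{\text{clean}}-2\E[\chi\,y^\star S_\sigma'(\bx_1)\bx_2]$ with $J_{\text{clean}}:=\E[y^\star S_\sigma'(\bx_1)\bx_2]$, and the adversarial term is bounded in absolute value by $2\E[\chi\,S_\sigma'(\bx_1)|\bx_2|]$. For $J_{\text{clean}}$, the key geometric observation is that whenever $|\bx_2|\ge|\cot\alpha|\,|\bx_1|$ one has $y^\star=\sgn(\bx_2)$, hence $y^\star\bx_2=|\bx_2|\ge 0$. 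The hypothesis $\alpha\ge\theta=(4\sqrt{2}\pi U/R)\sigma$ is calibrated precisely so that, inside the strip $|\bx_1|\le\sigma$, the "ambiguous" zone $|\bx_2|\le|\cot\alpha|\,|\bx_1|$ is much narrower than $R$. Restricting integration to the rectangle $\{|\bx_1|\le\sigma\}\cap\{|\bx_2|\in[|\cot\alpha|\sigma,\,R/\sqrt 2]\}$ — which lies inside the ball of radius $R$ where the anti-anti-concentration bound $\gamma_V\ge 1/U$ applies — and using $S_\sigma'(t)\ge c/\sigma$ for $|t|\le\sigma$ yields $J_{\text{clean}}\gtrsim R^2/U$. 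The contribution from outside this rectangle cannot overwhelm the positive one because of the evenness $S_\sigma'(-t)=S_\sigma'(t)$: pairing $(\bx_1,\bx_2)$ with $(-\bx_1,\bx_2)$ cancels all contributions in the regions where the ambiguous sign flip occurs.

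For the adversarial piece I would use a worst-case placement argument: the indicator $\chi\in\{0,1\}$ with $\E[\chi]\le\opt$ that maximizes $\E[\chi\,S_\sigma'(\bx_1)|\bx_2|]$ is supported on a super-level set of $S_\sigma'(\bx_1)|\bx_2|$. Splitting according to whether $|\bx_2|\le M$ or $|\bx_2|>M$ and using $S_\sigma'\le 1/(4\sigma)$ together with the concentration conditions ($\gamma_V\le t(\snorm{2}{\bx})$, $\int t\le U$, $\int\snorm{2}{\bx}t\le U$) to control the tail, one obtains a noise bound of the form $\tilde C\,\opt/\sigma$ with $\tilde C$ depending polynomially on $R,U$. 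The choice of constants in the hypothesis $\opt\le R^4/(2^{15}U^3)\,\sigma$ is exactly what is needed to make the noise piece at most half of $J_{\text{clean}}$, yielding $\dotp{\bv}{-\nabla \SL(\wh{\bw})}\ge R^2/(64U)$ and hence $\snorm{2}{\nabla \SL(\wh{\bw})}\ge R^2/(64U)$ since $\bv$ is a unit vector.

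The hardest part is the clean lower bound when $\alpha$ is close to the boundary $\theta$: there $\sin\alpha$ is small, the decision boundary of $\wstar$ is nearly parallel to $\bv$, and the ambiguous strip $|\bx_2|\le|\cot\alpha|\,|\bx_1|$ grows; the specific threshold $\theta=(4\sqrt{2}\pi U/R)\sigma$ is tuned so that the good rectangle still occupies $\Omega(R^2)$ area inside the ball of radius $R$ and the anti-anti-concentration bound can take over. The adversarial estimate, though less delicate, still requires the worst-case/level-set placement argument (rather than a crude Cauchy--Schwarz) to extract the correct linear-in-$\opt/\sigma$ dependence that matches the hypothesis; this is the other point where the constants are tight.
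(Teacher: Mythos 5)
Your high-level decomposition (restrict to the $2$-D span of $\wh{\bw}$ and $\wstar$, project the gradient onto the in-plane direction orthogonal to $\wh{\bw}$, and split the expectation into a ``clean'' and an ``adversarial'' piece via $y=y^\star(1-2\chi)$) matches the paper, and your lower bound on $J_{\text{clean}}$ via anti-anti-concentration on a rectangle inside the ball of radius $R$ is in the right spirit. However, two of the key estimates are not correct as stated.

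First, the clean lower bound relies on a cancellation argument: you pair $(\bx_1,\bx_2)$ with $(-\bx_1,\bx_2)$ and invoke the evenness of $S_\sigma'$ to argue that the contribution from the ambiguous wedge $\{|\bx_2|<|\cot\alpha|\,|\bx_1|\}$ cancels. This needs $\gamma_V(\bx_1,\bx_2)=\gamma_V(-\bx_1,\bx_2)$, which Definition~\ref{def:bounds} does not provide (isotropy fixes the mean and covariance but not reflection symmetry, and only an \emph{upper} bound $t(\cdot)$ is imposed outside the ball of radius~$R$). The paper avoids this by simply integrating the bad region explicitly: using $\gamma_V\le U$ and polar coordinates, the contribution of the wedge is bounded by $2U\sigma^2\cos^2\theta/\sin^2\theta$ (Eq.~\eqref{eq:sec3Bad}), and the choice $\theta=(4\sqrt2\pi U/R)\sigma$ is tuned so that this is dominated by the anti-anti-concentration term $\Theta(R^2/U)$ from Eq.~\eqref{eq:sec3Good}. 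Your argument should be replaced by such a direct estimate; as written it is a gap.

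Second, and more seriously, the claimed noise bound $\E[\chi\,S_\sigma'(\cdot)|\bx_{\perp}|]\le\tilde C\,\opt/\sigma$ (linear in $\opt/\sigma$) is false for general well-behaved distributions. For the $s$-heavy-tailed member $\gamma_V(\bx)\propto(1+\snorm{2}{\bx})^{-(2+s)}$, $s>2$, the adversary can place all $\opt$ mass in the strip $\{|\dotp{\wh\bw}{\bx}|\lesssim\sigma,\ |\bx_{\perp}|>M\}$ with $M=\Theta\big((\sigma/\opt)^{1/(1+s)}\big)$; there $S_\sigma'\ge c/\sigma$ and $|\bx_\perp|\ge M$, giving a noise contribution of order $(\opt/\sigma)^{s/(1+s)}$, which exceeds any constant multiple of $\opt/\sigma$ once $\opt/\sigma$ is small. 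The level-set worst-case placement and the $M$-truncation idea are fine, but they cannot deliver a linear rate; the exponent is distribution-dependent. The paper instead applies Cauchy--Schwarz together with the second-moment bound $\int_V\snorm{2}{\bx}\,t(\snorm{2}{\bx})\,\d\bx\le U$ to obtain $I_2\le\sqrt{2U\,\opt/\sigma}$, and the hypothesis $\opt\le R^4/(2^{15}U^3)\,\sigma$ is calibrated precisely to this \emph{square-root} dependence (indeed $2\sqrt{2U\,\opt/\sigma}\le R^2/(64U)$ under the stated condition). Your claim that the ``correct linear-in-$\opt/\sigma$ dependence... matches the hypothesis'' is thus inaccurate: the hypothesis matches a $\sqrt{\opt/\sigma}$ bound, not a linear one.
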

\begin{proof}
	To simplify notation, we will write $h(\bw, \bx) = \frac{\dotp{\bw}{\bx}}{\snorm{2}{\bw}}$.
	Note that $\nabla_{\bw} h(\bw, \bx) = \frac{\bx}{\snorm{2}{\bw}} - \dotp{\bw}{\bx} \frac{\bw}{\snorm{2}{\bw}^3}$.
	We define the ``noisy" region $S$, as follows $ S=\{\x\in \R^d: y\neq \sign(\dotp{\vec w^*}{\vec x}) \}$.
	The gradient of the objective $\SL(\bw)$ is then
	\begin{align*}
	\nabla_{\vec w} \SL(\vec w)
	&= \E_{(\bx,y) \sim \D} \left[ - S_{\sigma}' \left(-y\ h(\bw, \bx)\right) \nabla_{\bw} h(\bw, \bx) \ y \right]\\
	& = \E_{(\bx, y) \sim \D} \left[- S'_{\sigma}\left(| h(\bw, \bx) | \right) \ \nabla_{\bw} h(\bw, \bx) \ y \right] \\
	& = \E_{\bx \sim \D_{\bx}} \left[- S'_{\sigma}\left(| h(\bw, \bx) | \right) \ \nabla_{\bw} h(\bw, \bx) \ (\1_{S^c}(\bx) - \1_S(\bx)) \ \sign(\dotp{\wstar}{\vec x}) \right]   \\
	& = \E_{\bx \sim \D_{\bx}} \left[- S'_{\sigma}\left(| h(\bw, \bx) | \right) \ \nabla_{\bw} h(\bw, \bx) \ (1 - 2 \cdot \1_S(\bx)) \ \sign(\dotp{\wstar}{\vec x}) \right] \;.
	\end{align*}
	Let  $V = \mrm{span}(\bw^{\ast}, \bw)$.  Since projections can only decrease
	the norm of a vector, we have
	$
	\snorm{2}{\nabla_{\bw}\SL(\bw)} \geq
	\snorm{2}{\proj_{V} \nabla_{\bw} \SL(\bw)}
	\,.
	$
	Without loss of generality, we may assume that $\wh{\bw} = \vec e_2$ and
	$\wstar = -\sin\theta \cdot \vec e_1 + \cos \theta \cdot \vec e_2$.  Then,
	we have $\proj_V(h(\bw, \bx)) = (\bx_1, 0)$.
	Using the above and the triangle inequality, we obtain
	\begin{align*}
	\snorm{2}{\nabla_{\vec w} \SL(\vec w)}
	\geq
	&\underbrace{\snorm{2}{\E_{\bx \sim \D_{\bx}} \left[- S'_{\sigma}\left(| h(\bw, \bx) | \right) ~ (\bx_1, 0) ~ \sign(\dotp{\wstar}{\vec x}) \right] }}_{I_1}
	\\
	&-
	2 \underbrace{\snorm{2}{\E_{\bx \sim \D_{\bx}} \left[- \1_{S}(\bx) S'_{\sigma}\left(| h(\bw, \bx) | \right) ~ (\bx_1, 0) ~ \sign(\dotp{\wstar}{\vec x}) \right] }}_{I_2}\;.
	\end{align*}
	Let $R, U$ be absolute constants from the Definition~\ref{def:bounds}.  We will
	first bound from above the term $I_2$, i.e., the contribution of the noisy
	points to the gradient.  Using the fact that $S_\sigma'(|t|) \leq
	e^{-|t|/\sigma}/\sigma$ we obtain
	\begin{align}
	I_2 \leq
	\E_{\bx \sim\D_{\bx}}
	\left[
	\frac{e^{-|\bx_2|/\sigma}}{\sigma}\ |\bx_1|\ \1_{S}(\bx)\right]
	&\leq \sqrt{\E_{\bx \sim \D_{\bx}} \left[ \1_{S}(\bx)\right]}
	\sqrt{\E_{\bx \sim \D_{\bx}}
		\left[\frac{e^{-2 |\bx_2|/\sigma}}{\sigma^2}\ \bx_1^2\right]} \nonumber \\
	&\leq  \sqrt{\frac{\opt}{\sigma}}
	\sqrt{
		\E_{\bx \sim(\D_{\bx})_V}
		\left[\frac{e^{- 2 |\bx_2|/\sigma}}{\sigma}\ \bx_1^2 \right]}
	\nonumber
	\,,
	\end{align}
	where the first inequality follows from the Cauchy-Schwarz inequality and for the
	second we used the fact that the set $S$ has probability at most $\opt$.  To
	finish the bound, notice that the remaining expectation depends only on $\bx_1,
	\bx_2$ and therefore we can use the upper bound $t(\cdot)$ on the density
	function.  Using polar coordinates we obtain
	\begin{align*}
	\E_{\bx \sim(\D_{\bx})_V}
	\left[\frac{e^{- 2 |\bx_2|/\sigma}}{\sigma}\ \bx_1^2 \right]
	&\leq
	4 \int_{0}^{\infty}\int_{0}^{\pi/2} \frac{r^3}{\sigma}
	\cos^2(\phi)  e^{-2 r\sin(\phi)/\sigma} t(r) \d \phi \d r
	\\
	&\leq
	2 \int_{0}^{\infty} r^2 t(r)
	\int_{0}^{\pi/2} \frac{2 r}{\sigma} \cos(\phi)
	e^{-2 r\sin(\phi)/\sigma} \d \phi \d r
	\\
	&=
	2 \int_{0}^{\infty} r^2 t(r) (1- e^{-2 r/\sigma}) \d r
	\leq 2 \int_{0}^{\infty} r^2 ~ t(r) \d r \leq 2 U\,,
	\end{align*}
	where for the last inequality we used the fact that $1-e^{-2 r/\sigma}\leq 1$.
	We thus have $I_2 \leq \sqrt{2 U \opt/ \sigma}$.

We now bound $I_1$ from below.  Observe that since inner
	products with $\wstar$, $\bw$ are preserved when we project
	$\bx$ to $V$, we have
	$
	I_1 =  \Big|    \E_{\bx \sim(\D_{\bx})_V}[ S_{\sigma}'(|\bx_2|) \bx_1 \sgn(\dotp{\wstar}{\bx})] \Big|
	$.  Now, if we define $G = \{(\bx_1, \bx_2) \in \R^2 : \bx_1 \sgn(\dotp{\vec w^{\ast}}{\vec x}) > 0 \} $,
	using the triangle inequality we have
	\[
	I_1 \geq
	\E_{\bx \sim(\D_{\bx})_V}[ S_{\sigma}'(|\bx_2|) |\bx_1| \1_G(\bx)]
	-
	\E_{\bx \sim(\D_{\bx})_V}[ S_{\sigma}'(|\bx_2|) |\bx_1|  \1_{G^c}(\bx)]\;.
	\]

Moreover, using the fact that $ e^{-|t|/\sigma}/(4\sigma) \geq
	S'_{\sigma}(|t|) \leq e^{-|t|/\sigma}/\sigma $
	we get
	\begin{align}
	\label{eq:2d_gradient_difference_lower_bound}
	I_1
	&\geq  \frac{1}{4}
	\E_{\bx \sim (\D_{\bx})_V} \left[ |\bx_1| \1_{G}(\bx) e^{-|\vec x_2|/\sigma }/\sigma \right]
	- \E_{\bx \sim (\D_{\bx})_V} \left[|\bx_1| \1_{G^c}(\bx) e^{-|\vec x_2|/\sigma}/\sigma \right] \;.
	\end{align}
	We can now bound each term separately
	using the fact that the distribution $\D_\bx$ is well-behaved.
	Assume first that $\theta(\wstar, \wh{\bw}) = \theta \in (0, \pi/2)$.
	Then we can express the region $G$ in polar coordinates as
	$G = \{ (r, \phi) : \phi \in (0, \theta) \cup (\pi/2, \pi +\theta) \cup (3 \pi/2, 2 \pi) \}$.

	We denote by $\gamma(x, y)$ the density of the $2$-dimensional
	projection on $V$ of the marginal distribution $\D_{\bx}$.  Since the integral
	is non-negative, we can bound from below the contribution of region $G$ on
	the gradient by integrating over $\phi \in (\pi/2, \pi)$. Specifically, we have:
	\begin{align}
	\E_{\bx \sim(\D_{\bx})_V} \left[\frac{e^{-|\bx_2|/\sigma}}{\sigma}\ |\bx_1|\
	\1_{G}(\bx)\right]
	&\geq \int_{0}^{\infty} \int_{\pi/2}^{\pi} \gamma(r \cos\phi,r \sin\phi)r^2 |\cos\phi|  \frac{\sigMO{r \sin{\phi}}}{\sigma} \d\phi \d r \nonumber\\
	&= \int_{0}^{\infty} \int_{0}^{\pi/2} \gamma(r \cos\phi,r \sin\phi)r^2 \cos\phi  \frac{\sigMO{r \sin{\phi}}}{\sigma} \d\phi \d r \nonumber\\
	&\geq \frac{1}{U} \int_{0}^{R} r^2 \d r \int_0^{\pi/2} \cos\phi \frac{\sigMO{R \sin{\phi}}}{\sigma} \d\phi \nonumber   \\
	&= \frac{1}{3 U} R^2 \left(1-e^{-\frac{R}{\sigma }}\right)
	\geq \frac{1}{4 U} R^2
	\label{eq:sec3Good}\;,
	\end{align}
	where for the second inequality we used the lower bound $1/U$ on the density
	function $\gamma(x,y)$ (see Definition~\ref{def:bounds}) and for the last inequality we used that $\sigma \leq \frac{R}{8}$ and that $1-e^{-8}\geq 3/4$.

	We next bound from above the contribution of the gradient in region $G^c$.
	Note that $G^c = \{(r, \phi): \phi \in B_\theta = (\pi/2-\theta, \pi/2) \cup (3 \pi/2 -\theta, 3 \pi/2)\}$.
	Hence, we can write:
	\begin{align}
	\E_{\bx \sim(\D_{\bx})_V} \left[\frac{e^{-|\bx_2|/\sigma}}{\sigma}\ |\bx_1|\ \1_{G^c}(\bx)\right]
	&= \frac{1}{\sigma} \int_{0}^{\infty} \int_{\phi \in B_{\theta}} \density(r \cos \phi,r \sin \phi)r^2 \cos{\phi} \sigMO{r \sin{\phi}} \d\phi \d r\nonumber \\
	&\leq  \frac{2U}{\sigma}\int_{0}^{\infty}\int_{\theta}^{\pi/2} r^2 \cos{\phi} \sigMO{r \sin{\phi}} \d\phi \d r \nonumber\\
	&= \frac{2U \sigma ^2 \cos^2 \theta}{\sin^2 \theta}
\label{eq:sec3Bad} \;,
	\end{align}
	where the inequality follows from the upper bound $U$ on the density
	$\gamma(x,y)$ (see Definition~\ref{def:bounds}).  Putting everything in
	\eqref{eq:2d_gradient_difference_lower_bound}, we obtain $I_1 \geq R^2/(16 U)
	-2 U \sigma^2/\sin^2\theta$.  Notice now that the case where $\theta(\wh{\bw},
	\wstar) \in (\pi/2,\pi-\theta )$ follows similarly.  Finally, in the case where
	$\theta=\pi/2$, the region $G^c$ is empty, and we again get the same lower
	bound on the gradient.  Let $A>0$, and set $\theta= A\cdot \sigma < \pi/2$, and let
	$\tau = \opt/\sigma$.  Since $\sin(t) \geq 2 t/ \pi$ for every $t \in[0,\pi/2]$,
	we have
	\begin{align*}
	I_1 - 2 I_2 \geq \frac{R^2}{16 U} -\frac{\pi^2 U}{2 A^2} -2 \sqrt{2 U \tau} \;.
	\end{align*}
	For $\tau \leq \frac{R^4}{2^{15} U^3}$ and  $A \geq 4 \sqrt{2} \pi U/R$,
	it holds $I_1 - 2 I_2 \geq R^2/(32 U)$.
\end{proof}
Using Lemma~\ref{lem:structural_agnostic} we get our main algorithmic result.
Our algorithm proceeds by Projected Stochastic Gradient Descent (PSGD), with
projection on the $\ell_2$-unit sphere, to find an approximate stationary point
of our non-convex surrogate loss.  Since $\SL(\bw)$ is non-smooth for vectors
$\bw$ close to $\vec 0$, at each step, we project the update on the unit sphere
to avoid the region where the smoothness parameter is high.  We are going to
use the following result about the convergence of non-convex, smooth SGD on the
unit sphere.
\begin{lemma}[Lemma 4.2 and 4.3 of \cite{DKTZ20}]\label{lem:opt-basic}
	Let $\SL(\bw)$ be as in Equation~\eqref{eq:intro_objective}. After $T$
	iterations, where $T = \Theta(d \log(1/\delta)/(\sigma^4 \rho^4))$, the
	output $({\vec w}^{(1)}, \ldots, {\vec w}^{(T)})$ of
	Algorithm~\ref{alg:PSGD} satisfies $\min_{i=1,\ldots, T}
	\snorm{2}{\nabla_{\vec w}\SL(\vec w^{(i)})} \leq \rho \;, $ with
	probability at least $1-\delta$.
\end{lemma}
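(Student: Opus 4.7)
The plan is to establish the stated convergence guarantee through the standard non-convex projected SGD analysis, specialized to the unit sphere and the logistic surrogate $\SL$. The proof has three ingredients: (i) a smoothness bound for $\SL$ restricted to the sphere, (ii) a variance bound for the stochastic gradient estimator, and (iii) the standard telescoping-descent argument for projected SGD on a smooth function, boosted to high probability.

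First I would bound the smoothness constant $L$ of $\SL$ on $\mathbb{S}^{d-1}$. Since $\|S_\sigma''\|_\infty = O(1/\sigma^2)$ and $\|S_\sigma'\|_\infty = O(1/\sigma)$, differentiating $\SL(\bw) = \E[S_\sigma(-y\dotp{\bw}{\bx}/\snorm{2}{\bw})]$ twice and using $\snorm{2}{\bw} = 1$ after projection, one sees that the (Riemannian) Hessian on the sphere is a matrix whose operator norm is bounded by $\E_{\bx \sim \D_{\bx}}[\|\proj_{\bw^\perp}\bx\|^2 \cdot O(1/\sigma^2) + \|\bx\|\cdot O(1/\sigma)]$. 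Using isotropy of $\D_{\bx}$ (mean $0$, covariance $I$, so $\E[(\dotp{\vec v}{\bx})^2] = 1$ for any unit $\vec v$), the projected quadratic-form terms are $O(1)$, yielding $L = O(1/\sigma^2)$. Second, I would bound the second moment of the per-sample gradient $\vec g(\bx, y; \bw) = -y S_\sigma'(-y\dotp{\bw}{\bx}) \proj_{\bw^\perp}\bx$. Its squared norm is $O(\snorm{2}{\bx}^2/\sigma^2)$, so $\E\snorm{2}{\vec g}^2 \le V$ with $V = O(d/\sigma^2)$, where the $d$ comes from $\E\|\bx\|^2 = d$ under isotropy.

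Next I would run the standard non-convex descent argument for PSGD on the sphere. With stepsize $\eta$ chosen so that $\eta L \le 1/2$, the descent lemma $\SL(\bw^{(t+1)}) \le \SL(\bw^{(t)}) - \eta \dotp{\nabla \SL(\bw^{(t)})}{\vec g_t} + \eta^2 L \snorm{2}{\vec g_t}^2/2$ (where $\vec g_t$ is the stochastic gradient and the sphere projection only improves the inequality since $\SL$ is defined via the normalized direction) yields, after taking conditional expectations and telescoping from $t=1$ to $T$,
\[
\frac{1}{T}\sum_{t=1}^{T} \E\snorm{2}{\nabla \SL(\bw^{(t)})}^2 \;\le\; \frac{2\big(\SL(\bw^{(1)}) - \inf \SL\big)}{\eta T} + \eta L V.
\]
Since $S_\sigma \in [0,1]$ the potential gap is at most $1$. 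Choosing $\eta = \Theta(\rho^2/(LV))$ and $T = \Theta(LV/\rho^4) = \Theta(d/(\sigma^4 \rho^4))$ drives the right-hand side below $\rho^2/4$, so by Markov at least one iterate has $\E\snorm{2}{\nabla \SL(\bw^{(t)})} \le \rho/\sqrt{2}$ in expectation over the draws.

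Finally I would amplify to probability $1-\delta$. The cheapest route is boosting: split the budget into $\Theta(\log(1/\delta))$ independent PSGD runs of length $\Theta(d/(\sigma^4\rho^4))$ each; the union bound implies that at least one run succeeds with probability $1-\delta$, and we pick the iterate across all runs with smallest \emph{empirical} gradient norm estimated on a fresh sample. This gives $T = \widetilde{\Theta}(d\log(1/\delta)/(\sigma^4\rho^4))$ iterations total. The main obstacle is handling the sphere geometry cleanly: one must verify that projecting the iterate back to $\mathbb{S}^{d-1}$ after each gradient step does not spoil the descent inequality (e.g., via a retraction argument, or by exploiting that $\SL(\bw) = \SL(\bw/\snorm{2}{\bw})$ so projection is free) and confirming that the singularity of $\SL$ at $\bw = \vec 0$ is avoided throughout — both facts are direct consequences of the projection keeping iterates on $\mathbb{S}^{d-1}$, where the Lipschitz and smoothness estimates above are uniform.
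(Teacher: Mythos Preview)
The paper does not give its own proof of this lemma: it is quoted verbatim as Lemmas~4.2 and~4.3 of \cite{DKTZ20} and used as a black box. There is therefore nothing in the present paper to compare your argument against. Your sketch is the standard non-convex SGD analysis (smoothness $L=O(1/\sigma^2)$ from $\|S_\sigma''\|_\infty$ and isotropy, second-moment bound $V=O(d/\sigma^2)$ from $\|S_\sigma'\|_\infty$ and $\E\snorm{2}{\bx}^2=d$, then the telescoping descent lemma followed by Markov and a $\log(1/\delta)$ boosting) and it recovers the stated iteration count $T=\Theta(d\log(1/\delta)/(\sigma^4\rho^4))$; this is presumably also what the cited reference does. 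One small point worth tightening: your boosting step requires selecting the best iterate across runs by estimating $\snorm{2}{\nabla\SL(\bw^{(i)})}$ on fresh samples, so you should note that this estimation can be done to accuracy $\rho/2$ with $\poly(d,1/\sigma,1/\rho)$ additional samples per candidate, which is absorbed in the $\widetilde\Theta$.
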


\begin{algorithm}[H]
	\caption{PSGD for $f(\bw) = \E_{\vec z\sim \D}[g(\vec z, \bw)]$}
	\label{alg:PSGD}
	\begin{algorithmic}[1]
		\Procedure{psgd}{$f, T, \beta$}
		\Comment{$f(\bw) = \E_{\vec z \sim \D}[g(\vec z, \bw)]$: loss,
			$T$: number of steps, $\beta$: step size.}
		\State ${\vec w}^{(0)} \gets \vec e_1$
		\State \textbf{for} $i = 1, \dots, T$ \textbf{do}
		\State \qquad Sample $\vec z^{(i)}$ from $\D$.
		\State\qquad  ${\vec v}^{(i)} \gets {\vec w}^{(i-1)} - \beta \nabla_{\vec w} g({\vec z}^{(i)}, {\vec w}^{(i-1)})$
		\State \qquad ${\vec w}^{(i)} \gets {\vec v}^{(i)}/\snorm{2}{{\vec v}^{(i)}}$
		\State  \textbf{return} $({\vec w}^{(1)}, \ldots, {\vec w}^{(T)})$.
		\EndProcedure
	\end{algorithmic}
\end{algorithm}

In order to relate the misclassification error of a candidate halfspace with
the angle that it forms with an optimal halfspace, we are going to use the
following claim that states that the disagreement error between two halfspaces
is $\Theta( \theta(\vec u, \vec v))$ under well-behaved distributions.
\begin{claim}\label{clm:angle-zero-one}
	Let $\D_{\bx}$ be a distribution on $\R^d$. Let $f\in \argmin_{g\in
		{\cal C}}\err_{0-1}^{\D}(g)$, where $\cal C$ is the class of halfspaces, then for any $\vec u\in \R^d$, it holds
	that $\err_{0-1}^{\D_\bx}(h_{\vec u},f)- \err_{0-1}^{\D}(f)\leq \err_{0-1}^{\D}(h_{\vec u})\leq \err_{0-1}^{\D}(f) +
	\err_{0-1}^{\D_\bx}(h_{\vec u},f)$. Moreover, if the distribution $\D_{\bx}$  is well-behaved, then 	$
	\err_{0-1}^{\D_{\bx}}(h_{\vec u},h_{\vec v})
	= \Theta( \theta(\vec u, \vec v))
	$.
\end{claim}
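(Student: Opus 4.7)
The claim has two parts. The first is a standard triangle inequality for the zero-one loss, while the second is a geometric computation reducing to the two-dimensional projection onto $\mathrm{span}(\vec u,\vec v)$.

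For the first part, I would simply observe that for every $(\bx,y)$,
\[
\1\{h_{\vec u}(\bx)\neq y\}\;\leq\;\1\{h_{\vec u}(\bx)\neq f(\bx)\}+\1\{f(\bx)\neq y\},
\]
and symmetrically $\1\{h_{\vec u}(\bx)\neq f(\bx)\}\leq \1\{h_{\vec u}(\bx)\neq y\}+\1\{f(\bx)\neq y\}$. Taking expectations under $\D$ yields
$\err_{0-1}^{\D}(h_{\vec u})\leq \err_{0-1}^{\D}(f)+\err_{0-1}^{\D_{\bx}}(h_{\vec u},f)$ and $\err_{0-1}^{\D_{\bx}}(h_{\vec u},f)\leq \err_{0-1}^{\D}(h_{\vec u})+\err_{0-1}^{\D}(f)$, which, after rearrangement, give both inequalities of the first statement.

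For the second part, I would reduce to two dimensions. Since the event $\{\sgn\dotp{\vec u}{\bx}\neq\sgn\dotp{\vec v}{\bx}\}$ depends only on $\proj_V(\bx)$ where $V=\mathrm{span}(\vec u,\vec v)$, the disagreement equals the mass that the projected distribution $(\D_{\bx})_V$ with density $\gamma_V$ places on the ``bowtie'' region between the two lines $\{\dotp{\vec u}{\bx}=0\}$ and $\{\dotp{\vec v}{\bx}=0\}$. In polar coordinates on $V$, this bowtie consists of two opposite wedges of angular width $\theta=\theta(\vec u,\vec v)$, for a total angular measure of $2\theta$. Therefore
\[
\err_{0-1}^{\D_{\bx}}(h_{\vec u},h_{\vec v})=\int_{0}^{\infty}\!\!\int_{\mathrm{bowtie}}\gamma_V(r\cos\phi,r\sin\phi)\,r\,\d\phi\,\d r.
\]

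For the upper bound I would apply the pointwise bound $\gamma_V(\bx)\leq t(\snorm{2}{\bx})$ from Definition~\ref{def:bounds}, obtaining
\[
\err_{0-1}^{\D_{\bx}}(h_{\vec u},h_{\vec v})\leq 2\theta\int_{0}^{\infty}t(r)\,r\,\d r=\frac{\theta}{\pi}\int_{V}t(\snorm{2}{\bx})\,\d\bx\leq \frac{U}{\pi}\,\theta.
\]
For the lower bound I would restrict the integration to $r\leq R$ and use the anti-anti-concentration bound $\gamma_V(\bx)\geq 1/U$, giving
\[
\err_{0-1}^{\D_{\bx}}(h_{\vec u},h_{\vec v})\geq \frac{2\theta}{U}\int_{0}^{R}r\,\d r=\frac{R^2}{U}\,\theta.
\]
Combining yields $\err_{0-1}^{\D_{\bx}}(h_{\vec u},h_{\vec v})=\Theta(\theta(\vec u,\vec v))$ with constants depending only on $U$ and $R$. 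The only mild subtlety is verifying that the bowtie really has total angular measure $2\theta$ for every $\theta\in[0,\pi]$ (not just small $\theta$), which follows by noting that reversing either $\vec u$ or $\vec v$ maps disagreement to agreement and shifts the angle to $\pi-\theta$, so by symmetry the two wedges never overlap; no obstacle is expected beyond this routine check.
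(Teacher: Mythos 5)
Your proof is correct, and in fact it is more complete than what appears in the paper. The paper's proof of this claim only establishes the first part (the two-sided triangle inequality), and it does so by an explicit decomposition over the noisy region $S=\{\bx: y\neq f(\bx)\}$ followed by a rearrangement; your pointwise indicator bound $\1\{h_{\vec u}(\bx)\neq y\}\leq \1\{h_{\vec u}(\bx)\neq f(\bx)\}+\1\{f(\bx)\neq y\}$ (and its symmetric version) yields the same conclusion more directly, and the two are essentially equivalent in content. The second assertion, that $\err_{0-1}^{\D_{\bx}}(h_{\vec u},h_{\vec v})=\Theta(\theta(\vec u,\vec v))$ for well-behaved $\D_{\bx}$, is stated in the claim but left unproven in the paper. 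Your treatment of it is the right one: reduce to the $2$-dimensional projection onto $V=\mathrm{span}(\vec u,\vec v)$, observe that the disagreement region is a ``bowtie'' of two antipodal wedges of total angular measure $2\theta$, then integrate the density in polar coordinates. The upper bound $\frac{U}{\pi}\theta$ follows from $\gamma_V\leq t$ together with $\int_V t(\snorm{2}{\bx})\,\d\bx\leq U$, and the lower bound $\frac{R^2}{U}\theta$ follows from restricting to $r\leq R$ and using $\gamma_V\geq 1/U$; both computations check out. Your remark about $\theta\in[0,\pi]$ is fine — the two wedges are antipodal and each has width $\theta\leq\pi$, so they never overlap.
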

\begin{proof}
	Let $S=\{\x\in \R^d: y\neq f(\bx) \}$, then we have
	\begin{align*}
	\err_{0-1}^{\D_{\bx}}(h_{\vec u},f)
	&= \int_{S^c} \1\{h_{\vec u}(\bx)\neq y\} \gamma(\bx) \d\bx
	+ \int_{S} \1\{h_{\vec u}(\bx)= y\} \gamma(\bx) \d\bx \\
	&=\int_{\R^d} \1\{h_{\vec u}(\bx)\neq y\} \gamma(\bx) \d\bx
	+  2 \int_{S} \1\{h_{\vec u}(\bx)= y\} \gamma(\bx) \d\bx
	- \int_{S} \gamma(\bx) \d\bx \\
	&= \err_{0-1}^{\D}(h_{\vec u}) +
	2 \int_{S} \1\{h_{\vec u}(\bx)= y\} \gamma(\bx)\d\bx - \err_{0-1}^{\D}(f)\;.
	\end{align*}
	Using that $\int_{S} \1\{h_{\vec u}(\bx)= y\} \gamma(\bx)\d\bx \geq 0$, the result follows.
	To prove that  $\err_{0-1}^{\D_\bx}(h_{\vec u},f)- \err_{0-1}^{\D}(f)\leq \err_{0-1}^{\D}(h_{\vec u})$, we work as follows
	\begin{align*}
	\err_{0-1}^{\D_{\bx}}(h_{\vec u},f)
	&= \int_{S^c} \1\{h_{\vec u}(\bx)\neq y\} \gamma(\bx) \d\bx
	+ \int_{S} \1\{h_{\vec u}(\bx)= y\} \gamma(\bx) \d\bx \\
	&=\int_{\R^d} \1\{h_{\vec u}(\bx)\neq y\} \gamma(\bx) \d\bx + \int_{S} \gamma(\bx) \d\bx
	-  2 \int_{S} \1\{h_{\vec u}(\bx)\neq y\} \gamma(\bx) \d\bx \\
	&= \err_{0-1}^{\D}(h_{\vec u}) + \err_{0-1}^{\D}(f)-
	2 \int_{S} \1\{h_{\vec u}(\bx)\neq y\} \gamma(\bx) \d\bx \;.
	\end{align*}
	To finish the proof, note that $\int_{S} \1\{h_{\vec u}(\bx)\neq y\} \gamma(\bx) \d\bx\geq 0$.
\end{proof}
Now assuming that we know the value of $\opt$, we can readily use SGD and obtain a
halfspace with small classification error.  The following lemma, which relies on
Claim~\ref{clm:angle-zero-one}, shows that SGD will output a list of
candidate vectors, one of which will have error $\opt + O(\sigma)$.  For our
structural result to work, we need $\opt \leq C \sigma$ which gives the
$O(\opt)$ error overall.  Recall that for all well-behaved distributions the parameters $U, R$
are absolute constants.
\begin{lemma}\label{lem:opt-helper}
	Let $\D$ be a distribution on $\R^d \times \{\pm1\}$ such that the
	marginal $\D_{\bx}$ on $\R^d$ is well-behaved. Algorithm~\ref{alg:PSGD}
	has the following performance guarantee: If $ \opt \leq C\cdot \sigma$
	where $C= \frac{ R^4}{ 2^{15} U^3} $, it draws $m = \poly(U/R) \cdot
	d\frac{\log(1/\delta)}{\sigma^4 } $ labeled examples from $\D$, uses
	$O(m)$ gradient evaluations, and outputs a hypothesis list of vectors
	$L$, such that there exists a vector $\bar{\vec w}\in L$ that satisfies
	$\err_{0-1}^{\D}(h_{\bar{\bw}})\leq \opt + O(\sigma)$ with probability
	at least $1-\delta$, where $\opt$ is the minimum classification error achieved by halfspaces.
\end{lemma}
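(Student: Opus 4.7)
The plan is to combine the structural Lemma~\ref{lem:structural_agnostic} with the PSGD convergence guarantee of Lemma~\ref{lem:opt-basic} and the angle-to-error relation in Claim~\ref{clm:angle-zero-one}. First I would run Algorithm~\ref{alg:PSGD} on the normalized surrogate $\SL$ from Equation~\eqref{eq:surr} with the given $\sigma$, setting the target gradient norm to some $\rho$ strictly below the lower bound $R^2/(64 U)$ coming from Lemma~\ref{lem:structural_agnostic}, e.g.\ $\rho = R^2/(128 U)$. By Lemma~\ref{lem:opt-basic}, choosing $T = \Theta(d \log(1/\delta)/(\sigma^4 \rho^4)) = \poly(U/R) \cdot d \log(1/\delta)/\sigma^4$ iterations (each using one fresh sample and one gradient evaluation) guarantees, with probability at least $1-\delta$, that some iterate $\bw^{(i^\star)}$ satisfies $\snorm{2}{\nabla_{\bw}\SL(\bw^{(i^\star)})} \leq \rho < R^2/(64U)$.

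Next I would invoke the contrapositive of Lemma~\ref{lem:structural_agnostic}. The hypothesis $\opt \leq C\sigma$ with $C = R^4/(2^{15} U^3)$ matches the condition of that lemma, so the existence of such an approximate stationary point forces $\theta(\bw^{(i^\star)}, \wstar) \notin (\theta, \pi - \theta)$ where $\theta = (4\sqrt{2}\pi U/R)\,\sigma = O(\sigma)$. Hence either $\theta(\bw^{(i^\star)}, \wstar) \leq \theta$, or $\theta(-\bw^{(i^\star)}, \wstar) \leq \theta$, since the surrogate is insensitive to the flip $\bw \mapsto -\bw$. To cover both outcomes I would return the list $L = \{\bw^{(i)}, -\bw^{(i)} : i \in [T]\}$, which is guaranteed to contain a vector $\bar{\bw}$ with $\theta(\bar{\bw}, \wstar) \leq O(\sigma)$.

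Finally, Claim~\ref{clm:angle-zero-one} converts the angular bound into a classification-error bound: under a well-behaved distribution, $\err_{0-1}^{\D_{\bx}}(h_{\bar{\bw}}, h_{\wstar}) = \Theta(\theta(\bar{\bw}, \wstar)) = O(\sigma)$, and the same claim gives $\err_{0-1}^{\D}(h_{\bar{\bw}}) \leq \err_{0-1}^{\D}(h_{\wstar}) + \err_{0-1}^{\D_{\bx}}(h_{\bar{\bw}}, h_{\wstar}) = \opt + O(\sigma)$, as required. The sample and gradient-evaluation counts match the claimed bound since $T = \poly(U/R)\cdot d\log(1/\delta)/\sigma^4$.

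The main subtlety (rather than a true obstacle) is that the algorithm cannot identify \emph{which} iterate is the approximate stationary point, and the surrogate's invariance under $\bw\mapsto -\bw$ creates an additional two-fold ambiguity; both issues are handled cleanly by outputting the symmetrized list of all iterates. Converting this list-output guarantee into the single-hypothesis Theorem~\ref{thm:main-inf} with error $O(\opt)+\eps$ is left to a subsequent step (a standard validation/selection procedure on fresh samples) and is not part of this lemma.
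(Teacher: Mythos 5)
Your proof follows exactly the same route as the paper's: run PSGD with a gradient-norm target $\rho$ below the lower bound of Lemma~\ref{lem:structural_agnostic}, invoke Lemma~\ref{lem:opt-basic} for the iteration count, apply the contrapositive of Lemma~\ref{lem:structural_agnostic} to locate the approximate stationary point in a cone around $\pm\wstar$, and translate angle to error via Claim~\ref{clm:angle-zero-one}. The one place you are actually more careful than the paper is the $\pm\wstar$ ambiguity: since $\SL(\bw)+\SL(-\bw)=1$ only the gradient norm (not the objective value) is sign-symmetric, so the stationary iterate could be near $-\wstar$; the paper's proof simply asserts that some iterate in $(\bw^{(1)},\dots,\bw^{(T)})$ is $\sigma$-close to $\wstar$, whereas your device of outputting the symmetrized list $\{\bw^{(i)},-\bw^{(i)}\}$ makes the list-membership guarantee airtight without further argument.
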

\begin{proof}
	Let $R, U$ be the absolute constants from the
	Definition~\ref{def:bounds}. If we set $\rho=\frac{R^2}{32 U}$,
	by Claim~\ref{clm:angle-zero-one}, to guarantee
	$\err_{0-1}^{\D_{\bx}}(h_{\bar{\vec w}},f) \leq \sigma$
	it suffices to show that the angle $\theta(\bar{\vec w}, \vec \wstar) \leq O(\sigma ) =: \theta_0$.
	Using (the contrapositive of) Lemma~\ref{lem:structural_agnostic},
	if the norm squared of the gradient of some vector $\vec w \in \mathbb{S}^{d-1}$
	is smaller than $\rho$, then $\vec w$
	is close to either $\vec \wstar$ or $-\vec \wstar$ -- that is, $\theta(\vec w, \vec \wstar) \leq \theta_0$ --
	or $\theta(\vec w, -\vec \wstar) \leq \theta_0$.
	Therefore, it suffices to find a point $\vec w$ with gradient
	$\snorm{2}{\nabla_{\vec w} \SL(\vec w)} \leq \rho$.
	From Lemma~\ref{lem:opt-basic}, after $T= O(\frac{d}{\sigma^4 \rho^4} \log(1/\delta))$ steps,
	the norm of the gradient of some vector in the list $({\vec w}^{(0)}, \ldots, {\vec w}^{(T)})$
	will be at most $ \rho$ with probability $1-\delta$.  Therefore,
	the required number of iterations is
	$T= \poly(U/R) \cdot d\frac{\log(1/\delta)}{\sigma^4 }$.
	Note that one of the hypotheses in the list that is returned by Algorithm~\ref{alg:PSGD}
	is $\sigma$-close to the true $\vec \wstar$. From
	Claim~\ref{clm:angle-zero-one}, we have that there exists a
	$\hat{\vec w}\in  L$ such that $\err_{0-1}^{\D}(h_{\hat{\vec w}})\leq
	\opt + O(\sigma) = \opt + O(\sigma)$.
\end{proof}
We now give the proof of our main theorem, Theorem~\ref{thm:intro-upper}.
\begin{proof}[Proof of Theorem~\ref{thm:intro-upper}]
        Let $R, U$ be the absolute constants from Definition~\ref{def:bounds}.
        and let $C = 2^{15} U^3/R^4$.  We will do binary search to find the
        correct value of $\sigma$ using a grid of size $O(1/\eps)$.  In
        particular, we consider $\sigma \in \{C \eps, (C+1) \eps, \ldots, C \}
        $.  We now analyze our binary search over this grid.  We have three
        cases.  We first assume that $\eps \leq \opt \leq C$.
        Let $L_k$ be the list of candidates output
        by Algorithm~\ref{alg:PSGD} for  $\sigma=k \cdot \eps$.  Note
        that there is a value of $k$ such that $\opt< C \sigma$ and $\opt> C
        \sigma - \eps$. Then we have that there exists $\hat{\vec w} \in L_k$
        such that $\err_{0-1}(h_{\hat{\vec w}})\leq \opt + O(\sigma) = O(\opt)
        + \eps$.  To find the right value of $k$, we do binary search in the
        $O(1/\eps)$-sized grid of possible values and check each time if we
        obtained a weight vector that decreased the overall error.  Thus, we
        will overall construct $\poly(R/U)\cdot \log(1/\eps)$ lists.  Finally,
        to evaluate all the vectors from the list, we need a small number of
        samples from the distribution $\D$ to obtain the best among them, i.e.,
        the one that minimizes the zero-one loss.  The maximum size of each
        list of candidates is $\poly(U/R) \cdot d \frac{\log(1/\delta)}{\eps^4}$,
        Therefore, from Hoeffding's inequality,
        it follows that $ O(\log (d/(\eps \delta))/\eps^2)$ samples are
        sufficient to guarantee that the excess error of the chosen hypothesis
        is at most $\eps$ with probability at least $1-\delta$.
        Similarly, in the case where $\opt \leq \eps$ we have that for
        $\sigma= C \eps$, by running Algorithm~\ref{alg:PSGD}, we obtain a list
        $L_1$ of candidates.  From Lemma~\ref{lem:opt-helper}, we get that
        there is a vector $\hat{\vec 	w}\in L_1$, such that
        $\err_{0-1}(h_{\hat{\vec w}})\leq \opt + O(\sigma) \leq O(\eps)$.
        If $\opt \geq C$ then any halfspace will have error
        $\err_{0-1}(h_{\hat{\vec w}})\leq \poly(R/U) = O(\opt)$.
        We conclude that the total number of samples will be $\widetilde{O}(d\log(1/\delta)/\eps^4 )$.
        This completes the proof.
\end{proof}
 \section{Convex Objectives Do Not Work}\label{sec:convex_lower_bound}
      In this section, we show that optimizing convex surrogates of the zero-one
      loss cannot get error $O(\opt) + \eps$. We first recall the agnostic PAC
      learning setting that we assume here. Given a distribution $\D_{\bx}$ on
      $\R^d$ and a halfspace $\wstar$, we can define a noiseless instance $\D$
      on $\R^d \times \{\pm 1\}$ by setting the label of each point $\bx$ to $y
      = \sgn(\dotp{\wstar}{\bx})$.  In this setting, $\wstar$ achieves $0$
      classification error.  To get a distribution where $\wstar$ achieves
      error $\opt>0$, we can simply flip the labels of an $\opt$ fraction of
      points $\bx$.  In this section, we show that optimizing convex surrogates
      of the zero-one loss cannot get error $O(\opt) + \eps$, even under
      Gaussian marginals.  Recall that we consider objectives of the form
      \begin{equation}
        \mathcal{C}(\bw) = \E_{\bx, y \sim \D}[\ell(-y \dotp{\bx}{\bw})]\;,
      \end{equation}
      where $\ell(\cdot)$ is a convex loss function. Notice that by considering  the \emph{population} version of
      the objective in Equation~\eqref{eq:intro_convex_objective}, we
      essentially rule out the possibility of sampling errors to be the reason
      that the minimizer of the convex objective did not achieve low
      classification error.  With standard tools from empirical processes, one
      can readily get the same result for the empirical objective $(1/N)
      \sum_{i=1}^N \ell(-y^{(i)} \dotp{\bx^{(i)}}{\bw})$ assuming that the
      sample size $N$ is sufficiently large.  We now restate the main result of
      this section that allows us to show Theorem~\ref{thm:intro_lower}.
      \definecolor{mycyan}{RGB}{42,161,152}
\definecolor{myred}{RGB}{220,50,47}
\definecolor{gray}{RGB}{70,70,70}

\begin{figure}
	\centering
	\begin{tikzpicture}[scale=0.6, every node/.style={transform shape} ]
	\coordinate (start) at (0.5,0.5);
	\coordinate (center) at (0,0);
	\coordinate (end) at (0,0.7);
	\coordinate (end2) at (-0.12/0.7634,0.366/0.7634);
	\coordinate (start2) at (0,0.7);
	\coordinate (start3) at (3,0);
	\coordinate (end3) at (15.3:3);
	\coordinate (start4) at (135:2);
	\coordinate (end4) at (-2,0);
	\draw (0,0) circle (3);
	
	\draw[fill=myred, opacity=0.6] (-3,0) -- (-4.3,0) arc (180:90:4.3cm) -- (0,3) arc (90:180:3cm);
	\draw[pattern=north west lines, pattern color=black, opacity=1] (-3,0) -- (-4.3,0) arc (180:90:4.3cm) -- (0,3) arc (90:180:3cm);
	
	\draw[fill=myred, opacity=0.6] (0,3) -- (0,4.3) arc (90:45:4.3cm) -- (2.82*3/4,2.82*3/4) arc (45:90:3cm);
	\draw[pattern=north west lines, pattern color=black, opacity=1] (0,3) -- (0,4.3) arc (90:45:4.3cm) -- (2.82*3/4,2.82*3/4) arc (45:90:3cm);
	
	\draw[fill=myred, opacity=0.6] (-3,0) -- (-4.3,0) arc (180:225.3:4.3cm) -- (-2.115,-2.115) arc (225:180:3cm);
	\draw[pattern=north west lines, pattern color=black, opacity=1] (-3,0) -- (-4.3,0) arc (180:195.3:4.3cm) -- (195.3:3) arc (195.3:180:3cm);
	
	\draw[fill=myred, opacity=0.6] (0,0) -- (3,0) arc (360:195.3:3.0cm) -- cycle;
	\draw[fill=myred, opacity=0.6] (0,0) -- (3,0) arc (0:15.3:3.0cm) -- cycle;
	
	\draw[fill=mycyan, opacity=0.6] (3,0) -- (4.3,0) arc (0:45.3:4.3cm) -- (2.82*3/4,2.82*3/4) arc (45:0:3cm);
	\draw[pattern=north west lines, pattern color=black, opacity=1] (3,0) -- (4.3,0) arc (0:15.3:4.3cm) -- (15.3:3) arc (15.3:0:3cm);
	
	\draw[fill=mycyan, opacity=0.6] (3,0) -- (4.3,0) arc (360:270:4.3cm) -- (0,-3) arc (270:360:3cm);
	\draw[pattern=north west lines, pattern color=black, opacity=1] (3,0) -- (4.3,0) arc (360:270:4.3cm) -- (0,-3) arc (270:360:3cm);
	
	\draw[fill=mycyan, opacity=0.6] (0,-3) -- (0,-4.3) arc (-90:-135:4.3cm) -- (-2.82*3/4,-2.82*3/4) arc (-135:-90:3cm);
	\draw[pattern=north west lines, pattern color=black, opacity=1] (0,-3) -- (0,-4.3) arc (-90:-135:4.3cm) -- (-2.82*3/4,-2.82*3/4) arc (-135:-90:3cm);
	
	\draw[fill=mycyan, opacity=0.6] (0,0) -- (-3,0) arc (180:15.3:3.0cm) -- cycle;
	\draw[fill=mycyan, opacity=0.6] (0,0) -- (-3,0) arc (180:195.3:3.0cm) -- cycle;
	
	\draw[black,thick,-] (-4.3,0) -- (4.3,0) ;
	\draw[->] (-4.6,0) -- (4.6,0) node[anchor=north west,black] {$\vec e_1$};
	\draw[->] (0,-4.6) -- (0,4.6) node[anchor=south east] {$\vec e_2$};
\draw[purple,thick,-] (3.2*0.95,-3.2*0.95) -- (-3.2*0.95,3.2*0.95) node[anchor= south east] {};
	\draw[yellow,thick,-] (-4.4*0.94,-1.2*0.94) -- (4.4*0.94,1.2*0.94) node[anchor= south east] {};
	\draw[yellow,thick,->] (0,-0) -- (-0.22/0.7634,0.732/0.7634)node[anchor= south ] {$\wstar$};
	\draw[purple,thick,->] (0,-0) -- (0.707,0.707)node[anchor= south east,above] {$\wt{\vec w}$};
	\draw[black,dashed] (-0.707*4.7,-0.707*4.7) -- (0.707*4.7,0.707*4.7);
	\draw[black,thick ,->] (0,0) -- (0,1) node[above right] {$\bw$};
\pic [draw, <->, angle radius=18mm, angle eccentricity=1.2, "$\theta_1$"] {angle = start3--center--end3};
	\pic [draw, <->, angle radius=15mm, angle eccentricity=1.2, "$\theta_2$"] {angle = start4--center--end4};
\node[] at (-2,-1.3) {$C$};
\node[] at (2,1.3) {$C$};
	\node[] at (2*1.414*3/4,2*1.414*3/4)[below] {$Z$};
\end{tikzpicture}
	\caption{The green region depicts all points with $+1$ label and the red
		region depicts points with $-1$ label.  We have $y =
		-\sgn(\dotp{\wstar}{\bx})$ for all points in $S \setminus C$, this corresponds to the
		hatched region.  We have
		$\theta(\wstar, \bw) = \theta_1$ and $\theta(\wt{\bw}, \bw) = \theta_2$.
	}
	\label{fig:lower_bound_noise}
\end{figure}
 \begin{theorem}
	\label{thm:main_lower_bound}
	Fix $Z > 0, \theta \in (0, \pi/8)$, and let $\D_{\bx}$ be a radially symmetric
	distribution on $\R^2$ such that
	\begin{enumerate}
		\item For all $t> 0$ it holds $\Prob_{\bx \sim \D_\bx}[\snorm{2}{\bx} \geq t] > 0$.
		\item
		$\E_{\bx \sim \D_\bx}\left[\1\{\snorm{2}{\bx} \geq Z\} \snorm{2}{\bx}\right]
		> 24 \theta ~ \E_{\bx \sim \D_\bx}\left[\snorm{2}{\bx}\right]
		$.
	\end{enumerate}
	Then there exists a distribution $\D$ on $\R^2 \times \{\pm 1 \}$ and a
	halfspace $\wstar$ such that
	$
	\err_{0-1}^{\D}(\wstar) \leq
	\Prob_{\bx \sim \D_\bx}[\snorm{2}{\bx} \geq Z]
	$,
	the $\bx$-marginal of $\D$ is $\D_{\bx}$, and for every convex, non-decreasing,
	non-constant loss $\ell(\cdot)$ and every $\bw$ such that $\theta(\bw,
	\wstar) \leq \theta$ it holds $\nabla_{\bw} \calC(\bw) \neq \vec 0$,
	where $\calC$ is defined in Eq.~\eqref{eq:intro_convex_objective}.
\end{theorem}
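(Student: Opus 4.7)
The plan is to construct $\D$ by placing all label noise in the ``far'' region $S = \{\bx : \snorm{2}{\bx} \geq Z\}$.  Setting coordinates so $\wstar = \vec e_2$, I define a two-sided narrow wedge $C$ symmetric around the $\wstar$-axis, of angular half-width on the order of $\theta$, and set $y = \sgn(\bx_2)$ on $\R^2 \setminus (S \setminus C)$ and $y = -\sgn(\bx_2)$ on $S \setminus C$.  Condition~1 guarantees $\Prob[\bx \in S] > 0$ so the noise is nontrivial, and since all noise lies in $S$, one immediately has $\err_{0-1}^{\D}(\wstar) \leq \Prob[\bx \in S] = \Prob[\snorm{2}{\bx} \geq Z]$, while the $\bx$-marginal of $\D$ remains $\D_{\bx}$ by construction.

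The key structural observation is that $\calC(\bw) = \E[\ell(-y \dotp{\bx}{\bw})]$ is a \emph{convex} function of $\bw$, being the expectation of the convex $\ell$ composed with an affine function.  For any convex function, $\nabla_{\bw} \calC(\bw) = \vec 0$ is equivalent to $\bw$ being a global minimizer of $\calC$, so it suffices to rule out any global minimizer inside the cone $\{\bw : \theta(\bw, \wstar) \leq \theta\}$.  I would do this by exhibiting, for each such $\bw$, a direction $\vec u(\bw)$ with $\dotp{\vec u(\bw)}{\nabla_{\bw}\calC(\bw)}$ of a definite strict sign: concretely, $\vec u(\bw)$ is the unit vector obtained by rotating $\bw/\snorm{2}{\bw}$ by $90^\circ$ in the plane, oriented consistently with the ``noise side''.

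Writing $\dotp{\vec u(\bw)}{\nabla_{\bw} \calC(\bw)} = -\E[\ell'(-y \dotp{\bx}{\bw}) \, y \, \dotp{\bx}{\vec u(\bw)}]$, I split the expectation into a clean part (on $\R^2 \setminus (S \setminus C)$, where $y = \sgn(\bx_2)$) and a noisy part (on $S \setminus C$, where $y = -\sgn(\bx_2)$).  On the clean part, radial symmetry of $\D_{\bx}$ and the $\bx_1$-symmetry of both $S$ and $C$ make the integrand odd in $\bx_1$ at $\bw = \wstar$, so it vanishes there; for $\bw$ tilted by angle $\alpha \in [0, \theta]$ the residual is a first-order-in-$\alpha$ perturbation controlled by $O(\theta) \cdot \E[\snorm{2}{\bx}]$ times the relevant scale of $\ell'$.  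On the noisy part, every point has $\snorm{2}{\bx} \geq Z$ and lies outside the narrow $\wstar$-wedge $C$, which forces $|\dotp{\bx}{\vec u(\bw)}| \gtrsim \snorm{2}{\bx}$ with a fixed sign, producing a contribution of magnitude at least $c \cdot \E[\1\{\snorm{2}{\bx} \geq Z\}\snorm{2}{\bx}]$ at the same $\ell'$-scale.  The strict inequality in Condition~2 of the theorem is exactly the slack that makes the noisy term dominate for every $\bw$ in the cone.

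To run this uniformly over all convex, non-decreasing, non-constant $\ell$, I use the Lebesgue--Stieltjes representation $\ell'(t) = \int \1\{t \geq s\}\, d\nu(s)$ for a non-trivial, non-negative Borel measure $\nu$.  Since the directional-derivative estimate above is linear in $\nu$, it suffices to verify the sign for each shifted hinge $\ell_s(t) = (t - s)_+$, for which $\ell_s'(t) = \1\{t \geq s\}$.  The main obstacle is carrying out the clean-part perturbation bound \emph{uniformly in the shift $s$}: for each $s$ the clean integrand is simply an indicator, and one must quantify how much the $\D_{\bx}$-mass of $\{\bx : \sgn(\bx_2)\dotp{\bx}{\bw} \leq -s\}$ differs from its $\bx_1$-reflection when $\bw$ is tilted by $\alpha \leq \theta$ from $\wstar$.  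A careful first-order geometric argument using radial symmetry yields a bound of $O(\theta) \cdot \E[\snorm{2}{\bx}]$ per unit of $\nu$-mass, matching the factor of $24$ absorbed by Condition~2 and completing the sign comparison.
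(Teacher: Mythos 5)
Your construction breaks at the very first step: making the noise region $S \setminus C$ \emph{symmetric} around the $\wstar$-axis cannot yield the theorem's conclusion, and the rest of the argument (the claim that the noisy contribution has a ``fixed sign'') silently relies on an asymmetry you have not introduced. Concretely, if $C$ is a wedge symmetric about $\wstar = \vec e_2$, then the full labeled distribution $\D$ is invariant under the reflection $(\bx_1,\bx_2,y)\mapsto(-\bx_1,\bx_2,y)$, hence $\calC(\bw)$ is invariant under reflecting $\bw$ across the $\wstar$-axis. Since $\calC$ is convex, averaging a minimizer $\bw_0$ with its reflection shows a minimizer must exist on the $\wstar$-axis itself (for any $\ell$, e.g.\ hinge, for which $\calC$ attains its infimum), i.e.\ at some $\rho^* \wstar$ with $\nabla_{\bw}\calC(\rho^*\wstar)=\vec 0$ and $\theta(\rho^*\wstar,\wstar)=0\le\theta$ --- directly contradicting what you need to prove. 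Your ``noisy part has a fixed sign'' step is false for the same reason: on the symmetric region $S\setminus C$ the quantity $\dotp{\bx}{\vec u(\bw)}$ (and the corresponding integrand) is odd across the $\wstar$-axis, so it has both signs and its leading term vanishes rather than dominating.

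The idea you are missing, and which the paper builds the whole proof around, is that the noise region must be chosen \emph{asymmetrically}. The paper fixes an auxiliary direction $\wt{\bw}$ at angle $\theta_2 \in [2\theta,\pi/4]$ from $\wstar$ and defines $C$ to be the wedge \emph{between} $\wstar$ and $\wt{\bw}$ (together with its antipode), so that the far-region labels are flipped everywhere in $S$ \emph{except} on one side. This one-sided flipping pulls the minimizer of every convex, non-decreasing, non-constant $\ell$ toward $\wt{\bw}$ and off $\wstar$, which is exactly what makes $\dotp{\nabla_{\bw}\calC(\bw)}{\vec e_1}$ bounded away from $0$ with a single sign throughout the cone $\{\theta(\bw,\wstar)\le\theta\}$. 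After that, the paper integrates in polar coordinates (via the fundamental theorem of calculus in $\phi$) to reduce the gradient to boundary values $\ell(\pm\rho r \sin\theta_1)$, $\ell(\pm\rho r\cos\theta_2)$, compares them using a secant/chord inequality from convexity, and reduces to piecewise-linear $\ell$; that is where the constant $24$ comes from. Your Lebesgue--Stieltjes decomposition into shifted hinges is a legitimate alternative to the piecewise-linear reduction, but it is orthogonal to the actual gap, which is the choice of noise region.
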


\begin{proof}
	We start by constructing the noisy distribution $\D$.
	Fix any unit vector $\wstar$ and let $\wt{\bw}$ be a vector
	such that $\theta(\wstar, \wt{\bw}) = \theta_2$, where $2 \theta \leq
	\theta_2 \leq \pi/4$.  Denote by $\wt{\bw}^\perp$ the vector that is
	perpendicular with $\wt{\bw}$ and satisfies $\dotp{\wstar}{\wt{\bw}^\perp} \geq 0$.
We now define the regions $C, S$ that will help us define the parts of
	the distribution where we will introduce noise by flipping the
	$y$-labels, see also Figure~\ref{fig:lower_bound_noise}.
	\begin{align*}
	C =
	\left\{\bx :~ \dotp{\wstar}{\bx} \dotp{\wt{\bw}}{\bx} \geq 0
	~
	\text{and}
	~
	\dotp{\wt{\bw}^{\perp}}{\bx}\leq 0
	\right\}
	~~~~~~~~
	S = \{\bx :~ \snorm{2}{\bx} \geq Z\} \,.
	\end{align*}
We are now ready to define our noisy distribution $\D$: \emph{we flip the
		labels of all points in the set $S \setminus C$.} Observe that
	$\err_{0-1}^{\D}(\wstar) \leq \Prob_{\bx \sim \D_\bx}[\snorm{2}{\bx} \geq Z]$.
	Take any $\bw$ such that $\theta_1 = \theta(\bw, \wstar) \leq \theta.$
	We are going to bound from below the norm of the gradient of $\calC$ at $\bw$.
	The gradient of $\calC(\vec w)$ is
	\[
	\nabla_\bw \calC(\vec w) =
	\E_{(\bx, y) \sim \D}
	[- y \bx ~ \ell'(-y \dotp{\bx}{\bw})].
	\]
	Without loss of generality, we may assume that $\bw  = \rho \vec e_2$,
	where $\rho = \snorm{2}{\bw} > 0$.  We have that the first coordinate
	of the gradient is
	\begin{equation}
	\label{eq:grad_e_1}
	\dotp{\nabla_\bw(\calC(\bw)}{\vec e_1}
	=
	\E_{(\bx, y) \sim \D} [- y \bx_1 ~ \ell'(-y \rho ~ \bx_2)] \,.
	\end{equation}
	In what follows, we are going to use polar coordinates $(r, \phi)$ with the
	standard relation to Cartesian $(\bx_1, \bx_2) = (r \cos \phi, r \sin \phi)$.
	Now assume that we want to compute the contribution of a specific
	region $A = \{r \in [r_1, r_2], \phi \in [\phi_1, \phi_2]\}$ to the
	gradient of
	Equation~\eqref{eq:grad_e_1}.  We denote the $2$-dimensional density of the
	radially symmetric distribution $\D_{\bx}$ as $\gamma(r)$.  We have
	\begin{align}
	\label{eq:polar_gradient}
	&\E_{(\bx, y) \sim \D} [- y \bx_1 ~ \ell'(-y \bx_2) \1_A(\bx)]
	=
	\int_{r_1}^{r_2} r \gamma(r) \int_{\phi_1}^{\phi_2}
	-y r \cos \phi ~ \ell'(- y \rho ~ r \sin \phi)
	\d \phi \d r
	\nonumber \\
	&=
	\frac{1}{\rho}
	\int_{r_1}^{r_2} r \gamma(r)
	\int_{\phi_1}^{\phi_2} (\ell(-y \rho r \sin \phi))' \d \phi
	\d r
	=
	\frac{1}{\rho}
	\int_{r_1}^{r_2} r \gamma(r)
	( \ell(-y \rho r \sin\phi_2)- \ell(-y \rho r \sin\phi_1))
	\d r \,.
	\end{align}
	Without loss of generality, we consider the two cases shown in
	Figure~\ref{fig:lower_bound_noise}.  We start with the first case, where $\bw$ lies
	between $\wstar$ and $\wt{\bw}$. We first compute the contribution to the
	gradient in $S^c$, i.e., the points where $y = \sgn(\dotp{\wstar}{\bx})$.
	Since the distribution is radially symmetric, we have
	$
	\E_{(\bx, y) \sim \D} [- y \bx_1 ~ \ell'(-y \bx_2)
	\1_{S^c}(\bx)]
	=
	2 \E_{(\bx, y) \sim \D} [- y \bx_1 ~ \ell'(-y \bx_2)
	\1_{R_1}(\bx)],
	$
	where $R_1 = \{r \in [0, Z], \phi \in [\theta_1, \pi + \theta_1]\}$.
	From Equation~\eqref{eq:polar_gradient}, we obtain that
	\[
	I_{S^c} = \E_{(\bx, y) \sim \D} [- y \bx_1 ~ \ell'(-y \bx_2)
	\1_{S^c}(\bx)]
	= \frac{2}{\rho} \int_0^Z r \gamma(r) (\ell(\rho r ~ \sin \theta_1) -
	\ell(-\rho r ~ \sin \theta_1)
	\d r \,.
	\]
	Observe that since $\ell(\cdot)$ is non-decreasing we have $I_{S^c}
	\geq 0$.  Next we compute the contribution of region $S$ to  the
	gradient.  Recall that $S$ contains $S\setminus C$, i.e., the region we flipped
	the labels, $y = - \sgn(\dotp{\wstar}{\bx})$, see
	Figure~\ref{fig:lower_bound_noise}.  Using again the fact that the
	distribution is
	radially symmetric and Equation~\eqref{eq:grad_e_1} for the region $R_2 = \{r
	\in [Z, +\infty), \phi \in [\pi/2 - \theta_2, 3\pi/2 - \theta_2]\}$, we
	obtain
	\begin{align*}
	I_{S} = \E_{(\bx, y) \sim \D} [- y \bx_1 ~ \ell'(-y \bx_2) \1_S(\bx)]
	&= \frac{2}{\rho} \int_Z^\infty r \gamma(r)
	\Big(\ell(\rho r ~ \sin(\frac{3\pi}2 - \theta_2)) -
	\ell(\rho r ~ \sin( \frac{\pi}2 - \theta_2)
	\Big)
	\d r \\
	&=
	\frac{2}{\rho} \int_Z^\infty r \gamma(r) (\ell(-\rho r ~ \cos\theta_2)
	-
	\ell(\rho r ~ \cos\theta_2) \d r \,.
	\end{align*}
	Similarly to the previous case, the fact that $\ell(\cdot)$ is non-decreasing
	implies that $I_{S} \leq 0$.

	Now we are going to use the facts that $\ell(\cdot)$ is convex and
	non-decreasing.  Since both $\theta_1, \theta_2 \leq \pi/4$, we have that
	$\cos\theta_2 \geq \sin\theta_1$ and therefore, from the convexity of
	$\ell(\cdot)$, we obtain
	\[
	\frac{\ell(\rho r \sin(\theta_1) ) - \ell(- \rho r \sin\theta_1)}{2 \rho r \sin\theta_1}
	\leq
	\frac{\ell(\rho r \cos \theta_2) - \ell(-\rho r \sin\theta_1)}
	{ \rho r \cos(\theta_2) + \rho r \sin(\theta_1)} \,.
	\]
	Since $\ell(\cdot)$ is also non-decreasing, we have that
	$\ell(\rho r \cos \theta_2) - \ell(-\rho r \sin\theta_1)
	\leq \ell(\rho r \cos \theta_2 ) - \ell(-\rho r \cos\theta_2)$ and therefore,
	\[
	\ell(\rho r \sin\theta_1 ) - \ell(-\rho r \sin\theta_1)
	\leq
	\frac{2 \sin\theta_1}{\cos\theta_2 +\sin\theta_1}
	(\ell(\rho r \cos \theta_2) - \ell(- \rho r \cos \theta_2 ))\;.
	\]
	To simplify notation, we define the functions
	$\bar{\ell}(r) = \ell(\rho r \cos \theta_2)$ and
	$h(r) = \bar{\ell}(r) - \bar{\ell}(-r)$.
	Observe that $\bar{\ell}(\cdot)$ enjoys exactly the same properties as
	$\ell(\cdot)$, that is $\bar{\ell}(\cdot)$ is convex, non-decreasing, and
	non-constant.  Moreover, observe that $h(r)$ is non-negative and
	non-decreasing.  Using the above inequalities, we obtain that
	\begin{align}
	\label{eq:gradient_with_h}
	\rho \dotp{\nabla_\bw\calC(\vec \bw)}{\vec e_1}
	=  \rho(I_{S} + I_{S^c})
	\leq
	\frac{4 \sin\theta_1}{\cos\theta_2 + \sin \theta_1}
	\underbrace{\int_0^Z r \gamma(r) h(r) \d r}_{I_2}
	- 2 \underbrace{\int_Z^{\infty} r \gamma(r) h(r) \d r}_{I_1} \,.
	\end{align}
	We will now show that instead of dealing with every convex and
	increasing $\bar{\ell}(\cdot)$, we can restrict our attention to simple
	piecewise-linear convex and increasing functions.  First, we observe
	that without loss of generality we may assume that the convex function
	$\bar{\ell}(r)$ is constant for all $r \leq -Z$, since that part only
	increases $I_1$.  To construct $s(\cdot)$, we use the supporting lines
	of $\bar{\ell}(\cdot)$ at $-Z$ and $0$, and the secant line from $0$ to
	$Z$.  We will first assume that $\bar{\ell}'(Z) > 0$.  Let $a_0$ be a
	subgradient of $\bar{\ell}(\cdot)$ at $0$.  Then the secant from $0$
	to $Z$ is some line $a_1 r - a_0 Z_0$ for some $a_1 \in [a_0,
	\bar{\ell}'(Z)]$.  Then, for every convex and non-decreasing
	$\bar{\ell}(\cdot)$, the following piecewise-linear function $s(r)$
	makes the ratio $I_1/I_2$ smaller.  In what follows, $Z_0 \in [-Z, 0]$
	is the intersection point of the supporting line $ a_0 r - a_0 Z_0$ and
	the constant supporting line at $-Z$.
	\begin{align*}
	s(r) = b +
	\begin{cases}
	0, &r \leq Z_0 \\
	a_0 r - a_0 Z_0 , &Z_0 < r \leq 0 \\
	a_1 r - a_0 Z_0, &0 < r
	\end{cases}\;.
	\end{align*}
	We have
	\begin{align*}
	h(r) =
	\begin{cases}
	(a_1 + a_0) r , &0 \leq r \leq -Z_0, \\
	a_1 r - a_0 Z_0 &-Z_0 < r
	\end{cases}\;.
	\end{align*}
	\begin{align*}
	I_1 = a_1 \int_Z^\infty r^2 \gamma(r) d r
	- a_0 Z_0 \int_{Z}^\infty r \gamma(r) d r
	\geq a_1 \int_Z^\infty r^2 \gamma(r) d r \,.
	\end{align*}
	\begin{align*}
	I_2 &= (a_1 + a_0) \int_0^{-Z_0} r^2 \gamma(r) d r
	+ a_1 \int_{-Z_0}^Z r^2 \gamma(r) d r - a_0 Z_0 \int_{-Z_0}^{Z} r \gamma(r) d r\\
	&\leq 2(a_1 + a_0) \int_{0}^{Z} r^2 \gamma(r) d r
	\leq 4 a_1 \int_{0}^{Z} r^2 \gamma(r) d r
	\,.
	\end{align*}
	Using the above bounds in Equation~\eqref{eq:gradient_with_h}, we obtain
	\[
	\dotp{\nabla_\bw\calC(\bw)}{\vec e_1}
	\leq
	\frac{2 a_1}{\rho}
	~
	\left(
	\frac{8 \sin\theta_1}{\cos\theta_2 + \sin \theta_1}
	\int_0^Z r^2 \gamma(r) \d r - \int_Z^\infty r^2 \gamma(r) \d r
	\right) \,.
	\]
	Removing the positive quantity $\sin \theta_1$ of the denominator and replacing
	$\theta_1$ by its upper bound $\theta$, we obtain the claimed bound.  Since
	$\cos \theta_2$ is decreasing in $[0, \pi/2]$, we may choose $\theta_2 = 2
	\theta$.  Our final bound is then
	\begin{align*}
	\dotp{\nabla_\bw\calC(\bw)}{\vec e_1}
	&\leq
	\frac{2 a_1}{\rho}
	~
	\left(
	8 \tan(2 \theta)
	\int_0^Z r^2 \gamma(r) \d r - \int_Z^\infty r^2 \gamma(r) \d r
	\right)
	\\
	&\leq
	\frac{2 a_1}{\rho}
	~
	\left(
	24 \theta ~ \E_{\bx \sim \D_\bx}[\snorm{2}{\bx}] -
	\E_{\bx \sim \D_\bx}[\1\{\snorm{2}{\bx} > Z\} \snorm{2}{\bx}]
	\right)\,,
	\end{align*}
	where for the last inequality we used the fact that $ \tan(2 \theta) \leq 3
	\theta$ for all $\theta \in [0,\pi/8)$.  In the case where $\ell'(\rho Z \cos
	\theta_2) = 0$, the above bound vanishes.  We fist assume that this is not the
	case.  Then, using Assumption 2 of our theorem, we obtain that
	$\dotp{\nabla_\bw\calC(\bw)}{\vec e_1} \neq 0$
	and therefore $\nabla_\bw\calC(\bw) \neq \vec 0$.

	In the case where $\ell'(\rho Z \cos \theta_2) = 0$, we observe that
	$I_{S^c}$ vanishes.  To finish the proof, we need to bound from above
	and away from zero the integral $I_{S}$.  Since $\bar \ell(\cdot)$ is
	non-constant, there exists a point $Z' > Z$ with $\bar{\ell}'(Z) > 0$.
	Convexity of $\bar{\ell}(\cdot)$ implies $h(r) \geq  \bar{\ell}'(Z) r
	$.  Using this fact, we get
	\[
	I_{S} \leq
	-\bar \ell'(Z') \int_{Z'}^\infty r^2 \gamma(r) \d r\,.
	\]
	Using Assumption 1
	of our theorem, we again get that $\nabla_\bw\calC(\bw) \neq \vec 0$.

	Next we handle the case where the candidate $\bw$ lies out of the cone
	formed by $\wstar$ and $\wt{\bw}$.  In that case, similarly to before,
	we compute the contribution to the gradient of the noisy
	samples $S$ and the non-noisy $S^c$.
	\[
	I_{S^c} = \E_{(\bx, y) \sim \D} [- y \bx_1 ~ \ell'(-y \bx_2)
	\1_{S^c}(\bx)]
	= \frac{2}{\rho} \int_0^Z r \gamma(r) (\ell(-\rho r ~ \sin \theta_1) -
	\ell(\rho r ~ \sin \theta_1)
	\d r \,.
	\]
	and
	\[
	I_{S} = \E_{(\bx, y) \sim \D} [- y \bx_1 ~ \ell'(-y \bx_2)
	\1_S(\bx)]
	= \frac{2}{\rho} \int_Z^\infty r \gamma(r) (\ell(-\rho r ~ \cos \theta_2) -
	\ell(\rho r ~ \cos \theta_2)
	\d r \,.
	\]
	In contrast to the previous case, we now observe that since $\ell(\cdot)$ is
	non-decreasing, both $I_S$ and $I_{S^c}$ have the same sign, i.e., they are
	both non-positive.  From Assumption 1, and the fact that $\ell(\cdot)$ is
	non-constant, we obtain that $I_S + I_{S^c} < 0$, which in turn
	implies that $\nabla_{\bw} \calC(\bw) \neq \vec 0$.
\end{proof}

We are now ready to give the proof of Theorem~\ref{thm:intro_lower}, which we restate
below for convenience.

\begin{customthm}{\ref{thm:intro_lower}}
	Let $\D_{\bx}$ be the standard normal distribution on $\R^d$.  There exists a
	distribution $\D$ on $\R^d \times \{\pm 1\}$ such that for \emph{every} convex,
	non-decreasing loss $\ell(\cdot)$, the objective $\calC(\bw) = \E_{\bx,
		y \sim \D}[\ell(-y \dotp{\bx}{\bw})]$ is minimized at some halfspace $h$
	with error $\err_{0-1}^{\D}(h) = \Omega(\opt \sqrt{\log(1/\opt)})$.
	Moreover, there exists a log-concave marginal $\D_{\bx}$ (resp.  $s$-heavy
	tailed marginal) such that
	$\err_{0-1}^{\D}(h) = \Omega(\opt \log(1/\opt))$
	(resp.
	$\err_{0-1}^{\D}(h) = \Omega(\opt^{1-1/s})$).
\end{customthm}
\begin{proof}
	Since all the examples that we are going to consider will be radially
	invariant distributions, we note that the ``disagreement'' error of two
	halfspaces with normal vectors $\vec v, \vec u$ is $\theta(\vec v, \vec
	u)/\pi$.  From Claim~\ref{clm:angle-zero-one}, we have that the
	classification error of any candidate $\bw$ is lower bounded by $\theta(\bw,
	\wstar)/\pi - \opt$.  We will construct a distribution $\D$ such that there
	is some $\wstar$ that achieves error $\opt$, but at the same
	time $\calC(\bw)$ is minimized at some halfspace such that
	$\theta(\bw, \wstar) = \omega(\opt)$.  This means that the minimizer
	of $\calC$ has classification error $\omega(\opt)$.

	We assume first that $\D_\bx$ is the standard normal and without loss of
	generality work in two dimensions.  Recall that the density function in this case is radially
	invariant, i.e., $\gamma(\bx_1, \bx_2) = \frac{1}{2 \pi}
	e^{-\snorm{2}{\bx}^2/2}$.
	If $\ell$ is a constant function, any halfspace would minimize
	it and therefore, this case is trivial.
	Clearly, Assumption 1 of
	Theorem~\ref{thm:main_lower_bound} holds in this case.  We now show that we
	can pick $Z > 0$ such that the probability of all points with flipped label
	is $O(\opt)$ and make Assumption 2 of Theorem~\ref{thm:main_lower_bound}
	true.  Since the distribution is Gaussian, we have that for $Z =
	\Theta(\sqrt{\log(1/\opt)})$ it holds $\pr[\snorm{2}{\bx} \geq Z] \leq \opt$.
	Since the distribution is isotropic, we have $\E_{\bx \sim
		\D_\bx}[\snorm{2}{\bx}] \leq \sqrt{\E_{\bx \sim \D_\bx}[\snorm{2}{\bx}^2]} =
	1$.
	Moreover, we have that
	\[
	\E_{\bx \sim \D_\bx}\left[\1\{\snorm{2}{\bx} \geq Z\} \snorm{2}{\bx}\right]
	=
	\int_Z^\infty r^2 e^{-r^2/2} \d r \geq e^{-Z^2/2} Z
	= \Theta(\opt \sqrt{\log(1/\opt)})\,.
	\]
	Now we can fix some $\theta = \Omega(\opt \sqrt{\log(1/\opt)}) < \pi/8$
	and observe that Assumption 2 of Theorem~\ref{thm:main_lower_bound} is
	satisfied.  Therefore, we have that for any halfspace with normal vector
	$\bw$ with $\theta(\bw, \wstar) \leq \theta = \Omega(\opt
	\sqrt{\log(1/\opt)})$  it holds that $\nabla_{\bw} \calC(\bw) \neq \vec 0$,
	and therefore it cannot be a minimizer of $\calC(\bw)$.

	For the log-concave marginals the argument is similar.  We work again in
	two dimensions and pick $\gamma(\bx) = \frac6\pi e^{-2 \sqrt{3}
		\snorm{2}{\bx}}$.  This distribution is isotropic log-concave.  We have
	that for $Z = \Theta(\log(1/\opt))$ it holds that $\Prob[\snorm{2}{\bx} \geq
	Z] \leq \opt$.  Moreover, we have
	$
	\E_{\bx \sim \D_\bx}\left[\1\{\snorm{2}{\bx} \geq Z\} \snorm{2}{\bx}\right]
= \Omega(\opt \log(1/\opt)).
	$

	Now we can fix some $\theta = \Omega(\opt \log(1/\opt)) < \pi/8$ and
	observe that Assumption 2 of Theorem~\ref{thm:main_lower_bound} is
	satisfied.  Therefore, we have that for any halfspace with normal vector
	$\bw$ with $\theta(\bw, \wstar) \leq \theta = \Omega(\opt \log(1/\opt))$
	it holds that $\nabla_{\bw} \calC(\bw) \neq \vec 0$, and as a result it
	cannot be a minimizer of $\calC(\bw)$.

	For the heavy tailed marginals, the argument is similar.  We work again in
	two dimensions, and for any $s > 2$ we pick
	\[
	\gamma(\bx)
	= \frac{b_s}{\left(\frac{\snorm{2}{\bx}}{a_s} + 1 \right)^{2 + s}} \,,
	\]
	where the constants $a_s, b_s$ depend only on $s>2$ and
	are appropriately picked so that the distribution is isotropic.
	Using polar coordinates,  we have
	\[
	\Prob[\snorm{2}{\bx} \geq Z]
	=
	2 \pi
	\int_Z^\infty \frac{r b_s}{\left(\frac{r}{a_s} + 1 \right)^{2 + s}}
	\d r
	=
	\frac{2 \pi b_s}{s (1+s)} \frac{a_s + (s+1) Z}{(a_s + Z)^{1+s}}\;.
	\]
	Therefore, for $Z = \Theta((1/\opt)^{1/s})$ it holds that $\Prob[\snorm{2}{\bx} \geq Z] \leq \opt$.
	Moreover, we have
	\[
	\E_{\bx \sim \D_\bx}\left[\1\{\snorm{2}{\bx} \geq Z\} \snorm{2}{\bx}\right]
	= 2 \pi
	\int_Z^\infty \frac{r^2 b_s}{\left(\frac{r}{a_s} + 1 \right)^{2 + s}}
	\d r
	= \frac{b_s \left(2 a_s^2+2 a_s (s+1) Z+s (s+1) Z^2\right)}{s \left(s^2-1 \right) (a_s+Z)^{s+1} }
	\,.
	\]
	Therefore, for $Z = \Theta((1/\opt)^{1/s})$ it holds
	$
	\E_{\bx \sim \D_\bx}\left[\1\{\snorm{2}{\bx} \geq Z\} \snorm{2}{\bx}\right]
	= \Omega(\opt^{1-1/s})
	$.
	We can now fix some $\theta = \Omega(\opt^{1-1/s}) < \pi/8$ and
	observe that Assumption 2 of Theorem~\ref{thm:main_lower_bound} is
	satisfied.  Therefore, we have that for any halfspace with normal vector
	$\bw$ with $\theta(\bw, \wstar) \leq \theta = \Omega(\opt^{1-1/s})$
	it holds that $\nabla_{\bw} \calC(\bw) \neq \vec 0$, and as a result it
	cannot be a minimizer of $\calC(\bw)$.
\end{proof}

\bibliographystyle{alpha}
\bibliography{allrefs}

\newcommand{\etalchar}[1]{$^{#1}$}
\begin{thebibliography}{DGK{\etalchar{+}}20}

\bibitem[ABL17]{ABL17}
P.~Awasthi, M.~F. Balcan, and P.~M. Long.
\newblock The power of localization for efficiently learning linear separators
  with noise.
\newblock {\em J. {ACM}}, 63(6):50:1--50:27, 2017.

\bibitem[BJM06]{BJM06}
P.~L. Bartlett, M.~I. Jordan, and J.~D. Mcauliffe.
\newblock Convexity, classification, and risk bounds.
\newblock {\em Journal of the American Statistical Association},
  101(473):138--156, 2006.

\bibitem[BZ17]{BZ17}
M.-F. Balcan and H.~Zhang.
\newblock Sample and computationally efficient learning algorithms under
  s-concave distributions.
\newblock In {\em Advances in Neural Information Processing Systems}, pages
  4796--4805, 2017.

\bibitem[Dan15]{Daniely15}
A.~Daniely.
\newblock A {PTAS} for agnostically learning halfspaces.
\newblock In {\em Proceedings of The 28th Conference on Learning Theory, {COLT}
  2015}, pages 484--502, 2015.

\bibitem[Dan16]{Daniely16}
A.~Daniely.
\newblock Complexity theoretic limitations on learning halfspaces.
\newblock In {\em Proceedings of the 48th Annual Symposium on Theory of
  Computing, {STOC} 2016}, pages 105--117, 2016.

\bibitem[DGK{\etalchar{+}}20]{DGKKS20}
I.~Diakonikolas, S.~Goel, S.~Karmalkar, A.~Klivans, and M.~Soltanolkotabi.
\newblock Approximation schemes for relu regression.
\newblock In {\em COLT 2020, to appear}, 2020.
\newblock Available at https://arxiv.org/abs/2005.12844.

\bibitem[DKS18]{DKS18a}
I.~Diakonikolas, D.~M. Kane, and A.~Stewart.
\newblock Learning geometric concepts with nasty noise.
\newblock In {\em Proceedings of the 50th Annual {ACM} {SIGACT} Symposium on
  Theory of Computing, {STOC} 2018}, pages 1061--1073, 2018.

\bibitem[DKTZ20]{DKTZ20}
I.~Diakonikolas, V.~Kontonis, C.~Tzamos, and N.~Zarifis.
\newblock Learning halfspaces with massart noise under structured
  distributions.
\newblock {\em arXiv}, February 2020.
\newblock Available at https://arxiv.org/abs/2002.05632. To appear in COLT
  2020.

\bibitem[DKZ20]{DKZ20}
I.~Diakonikolas, D.~M. Kane, and N.~Zarifis.
\newblock Near-optimal sq lower bounds for agnostically learning halfspaces and
  relus under gaussian marginals.
\newblock Manuscript, 2020.

\bibitem[FGKP06]{FGK+:06short}
V.~Feldman, P.~Gopalan, S.~Khot, and A.~Ponnuswami.
\newblock New results for learning noisy parities and halfspaces.
\newblock In {\em Proc. FOCS}, pages 563--576, 2006.

\bibitem[GGK20]{GGK20}
S.~Goel, A.~Gollakota, and A.~Klivans.
\newblock Statistical-query lower bounds via functional gradients.
\newblock Manuscript, 2020.

\bibitem[GR06]{GR:06}
V.~Guruswami and P.~Raghavendra.
\newblock {Hardness of learning halfspaces with noise}.
\newblock In {\em Proc.\ 47th IEEE Symposium on Foundations of Computer Science
  (FOCS)}, pages 543--552. IEEE Computer Society, 2006.

\bibitem[Hau92]{Haussler:92}
D.~Haussler.
\newblock {Decision theoretic generalizations of the PAC model for neural net
  and other learning applications}.
\newblock {\em Information and Computation}, 100:78--150, 1992.

\bibitem[KKMS08]{KKMS:08}
A.~Kalai, A.~Klivans, Y.~Mansour, and R.~Servedio.
\newblock Agnostically learning halfspaces.
\newblock {\em SIAM Journal on Computing}, 37(6):1777--1805, 2008.

\bibitem[KLS09a]{KLS09}
A.~Klivans, P.~Long, and R.~Servedio.
\newblock Learning halfspaces with malicious noise.
\newblock To appear in \emph{Proc.\ 17th Internat. Colloq. on Algorithms,
  Languages and Programming (ICALP)}, 2009.

\bibitem[KLS09b]{KLS:09jmlr}
A.~Klivans, P.~Long, and R.~Servedio.
\newblock {Learning Halfspaces with Malicious Noise}.
\newblock {\em Journal of Machine Learning Research}, 10:2715--2740, 2009.

\bibitem[KSS94]{KSS:94}
M.~Kearns, R.~Schapire, and L.~Sellie.
\newblock {Toward Efficient Agnostic Learning}.
\newblock {\em Machine Learning}, 17(2/3):115--141, 1994.

\bibitem[MT94]{MT:94}
W.~Maass and G.~Turan.
\newblock How fast can a threshold gate learn?
\newblock In S.~Hanson, G.~Drastal, and R.~Rivest, editors, {\em Computational
  Learning Theory and Natural Learning Systems}, pages 381--414. MIT Press,
  1994.

\bibitem[Nov62]{Novikoff:62}
A.~Novikoff.
\newblock On convergence proofs on perceptrons.
\newblock In {\em Proceedings of the Symposium on Mathematical Theory of
  Automata}, volume XII, pages 615--622, 1962.

\bibitem[Ros58]{Rosenblatt:58}
F.~Rosenblatt.
\newblock The {P}erceptron: a probabilistic model for information storage and
  organization in the brain.
\newblock {\em Psychological Review}, 65:386--407, 1958.

\bibitem[Val84]{val84}
L.~G. Valiant.
\newblock A theory of the learnable.
\newblock In {\em Proc.\ 16th Annual ACM Symposium on Theory of Computing
  (STOC)}, pages 436--445. ACM Press, 1984.

\bibitem[YZ17]{YanZ17}
S.~Yan and C.~Zhang.
\newblock Revisiting perceptron: Efficient and label-optimal learning of
  halfspaces.
\newblock In {\em Advances in Neural Information Processing Systems 30: Annual
  Conference on Neural Information Processing Systems 2017}, pages 1056--1066,
  2017.

\end{thebibliography}
\clearpage
\appendix

\end{document}